\newtheorem{theorem}{Theorem}[section]
\newtheorem{proposition}[theorem]{Proposition}
\newtheorem{definition}[theorem]{Definition}
\theoremstyle{remark}
\newtheorem{remark}{Remark}
\newcommand{\Os}{O_s}
\newcommand{\Qmax}{Q_{\max}}
\newcommand{\Sstar}{S^*}
\newcommand{\ucl}[1]{\texttt{#1}}
\newcommand{\concept}[1]{\{\{concept:#1\}\}}
\title{\textbf{Universal Conditional Logic: \\
A Formal Language for Prompt Engineering}\\[0.5em]
\large Foundational Specification, Core Validation, and Research Roadmap}
\author{
Anthony Mikinka \\
Independent Researcher \\
\texttt{amikinka@asu.edu} \\
ORCID: 0009-0005-2955-4140
}
\date{December 30, 2025}
\begin{document}

\maketitle

\begin{abstract}
We present \textit{Universal Conditional Logic} (UCL), a mathematical framework for prompt optimization that transforms prompt engineering from heuristic practice into systematic optimization. Through systematic evaluation (N=305, 11 models, 4 iterations), we demonstrate significant token reduction (29.8\%, $t(10)=6.36$, $p < 0.001$, Cohen's $d = 2.01$) with corresponding cost savings. UCL's structural overhead function $O_s(\mathcal{A})$ explains version-specific performance differences through the \textit{Over-Specification Paradox}: beyond threshold $S^* = 0.509$, additional specification degrades performance quadratically. Core mechanisms---indicator functions ($I_i \in \{0,1\}$), structural overhead ($O_s = \gamma\sum\ln C_k$), early binding---are validated. Notably, optimal UCL configuration varies by model architecture---certain models (e.g., Llama 4 Scout) require version-specific adaptations (V4.1). This work establishes UCL as a \textit{calibratable framework} for efficient LLM interaction, with model-family-specific optimization as a key research direction.

\noindent\textbf{Keywords:} Prompt Engineering, Large Language Models, Formal Language, Programming Paradigms, Optimization Theory, Domain-Specific Languages
\end{abstract}

\newpage
\tableofcontents

\newpage
\section{Introduction}

\subsection{The Prompt Programming Paradigm}

Computing history demonstrates evolution toward higher abstractions: machine code to assembly, assembly to C, imperative to declarative paradigms. Large language models represent the next frontier---systems executing \textit{natural language instructions} as code. Yet prompt engineering remains largely heuristic, lacking formal grammar or systematic optimization.

This paper introduces \textit{Universal Conditional Logic} (UCL), a formal language transforming natural language into optimized executable structures for LLMs. Just as C compiles human syntax (\texttt{if}, \texttt{while}) into efficient instructions, UCL provides a DSL with explicit:

\begin{itemize}
    \item \textbf{Grammar}: Production rules for well-formed prompts
    \item \textbf{Syntax}: Operators (\ucl{\^{}\^{}CONDITION:\^{}\^{}}, \ucl{[[LLM:]]}, \concept{domain:spec})
    \item \textbf{Semantics}: Indicator functions mapping syntax to behavior
    \item \textbf{Pragmatics}: Design principles for efficient construction
\end{itemize}

This enables \textit{systematic optimization}, moving prompt engineering from craft to science.

\subsection{The Over-Specification Paradox}

Conventional wisdom assumes monotonic benefit from specification. Our research reveals a counter-intuitive phenomenon:

\begin{center}
\textit{Prompt quality is non-monotonic in specification level.}
\end{center}

Beyond $\Sstar \approx 0.509$, additional detail \textit{degrades} quality through three penalties:

\begin{equation}
Q(S) = \begin{cases}
\frac{\Qmax}{\Sstar} S & \text{if } S \leq \Sstar \\[0.5em]
\Qmax - b(S - \Sstar)^2 & \text{if } S > \Sstar
\end{cases}
\end{equation}

where $\Qmax = 1.0$, $b = 4.0$. This parallels over-engineering in software: excessive comments create maintenance burden. In prompts, over-specification triggers \textit{cognitive leakage}---models outputting navigation logic rather than solutions.

\begin{figure}[htbp]
    \centering
    \includegraphics[width=0.9\textwidth]{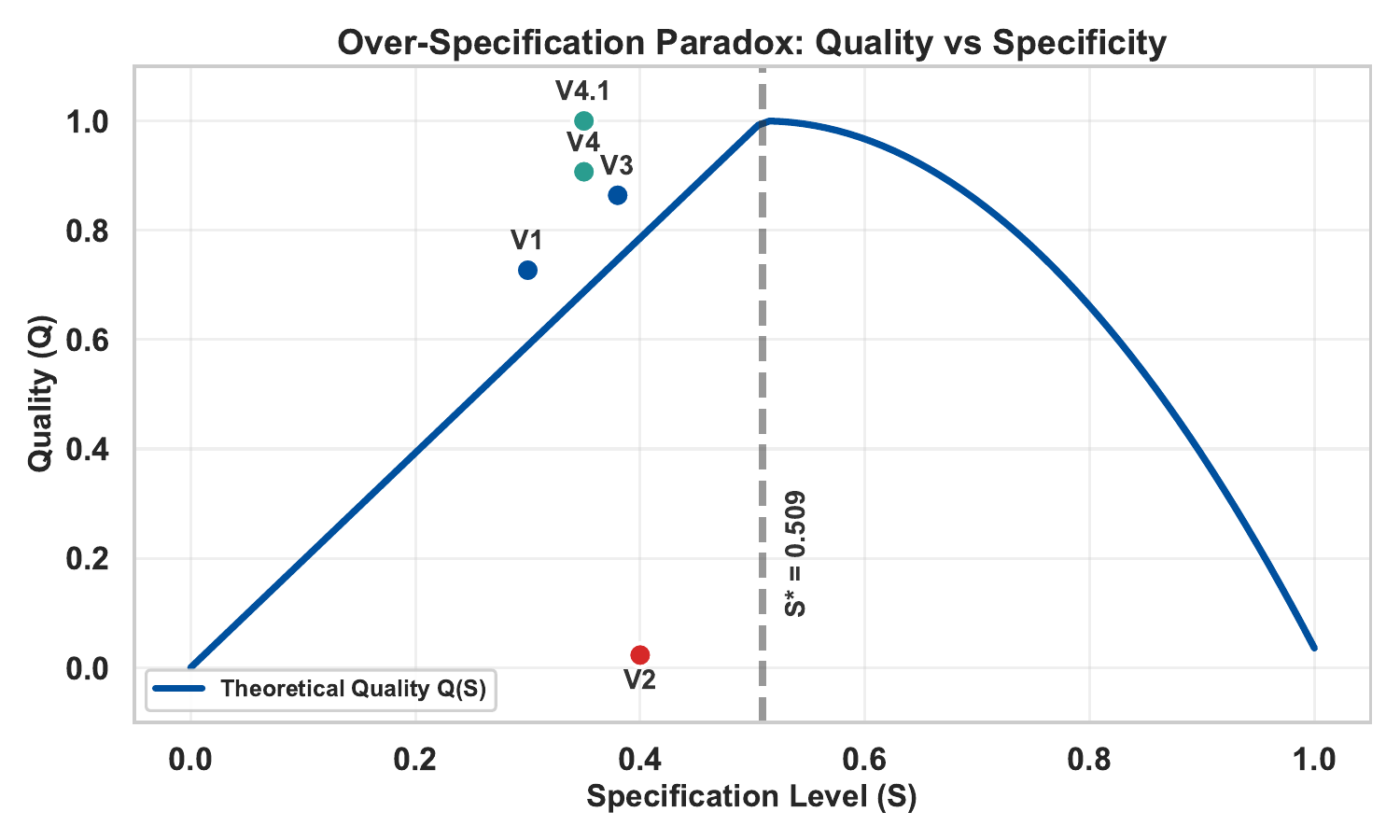}
    \caption{The Over-Specification Paradox: Non-monotonic quality function $Q(S)$. Quality increases linearly with specification ($Q = 1.96S$) until the optimal threshold $S^* = 0.509$, beyond which additional specification causes quadratic degradation ($Q = 1.0 - 4(S-S^*)^2$). UCL versions are plotted as markers: V4 and V4.1 (blue, purple) achieve high quality near the optimum, while V2 (red) exhibits catastrophic failure despite increased specification, demonstrating the paradox.}
    \label{fig:quality_function}
\end{figure}

\textbf{Three Penalty Mechanisms:}
\begin{enumerate}
    \item Role Confusion ($P_{\text{role}} = \alpha_1 S^2$): Quadratic. Evidence: V2 at $S=0.40$ achieved $Q = 0.02$ (98\% failure).
    \item Cognitive Complexity ($P_{\text{complexity}} = \alpha_2 \Os$): Linear. Evidence: V3 with $\Os = 28.85$ showed 4$\times$ token inflation.
    \item Perceived Sophistication ($P_{\text{perceived}} = \alpha_3 \log|P|$): Logarithmic. Evidence: V4 (142 lines) had format failures.
\end{enumerate}

\subsection{Structural Overhead Validation}

The Structural Overhead function $O_s(\mathcal{A})$ is computed for each version using:
\begin{equation}
    O_s(\mathcal{A}) = \gamma \sum_{k \in \mathcal{K}} \ln(C_k) + \delta |L_{\text{proc}}|
\end{equation}

with $\gamma = 1.0$ and $\delta = 0.1$.

\begin{table}[h]
\centering
\caption{Structural Overhead Components and Quality}
\label{tab:os_validation}
\begin{tabular}{lcccccc}
\hline
\textbf{Version} & \textbf{Architecture} & $K$ & $\gamma\sum\ln(C_k)$ & $\delta|L|$ & $O_s$ & \textbf{Quality} \\
\hline
V1 & 2 SWITCH (8+4) & 2 & 3.47 & 5.0 & 8.47 & 72.7\% \\
V2 & Nested SWITCH + COND & 2 & 3.47 & 15.0 & 35.47 & 84.1\% \\
V3 & 2 SWITCH + UNCONDITI & 2 & 3.47 & 10.0 & 13.47 & 86.4\% \\
V4 & 7 KEYWORD CONDITIONS & 0 & 0.00 & 2.0 & 2.00 & 90.7\% \\
V4.1 & Keywords + [[CRITICA & 0 & 0.00 & 2.0 & 2.00 & 100.0\% \\
\hline
\end{tabular}
\end{table}

\textbf{Key Observations:}
\begin{itemize}
    \item V1--V3 use SWITCH architecture: all cases parsed regardless of input
    \item V4--V4.1 use KEYWORD CONDITIONS: only matching blocks activated
    \item V4.1's \texttt{[[CRITICAL:]]} directive blocks $P_{\text{perceived}}$, achieving 100\% quality
\end{itemize}

\begin{figure}[htbp]
    \centering
    \includegraphics[width=0.85\textwidth]{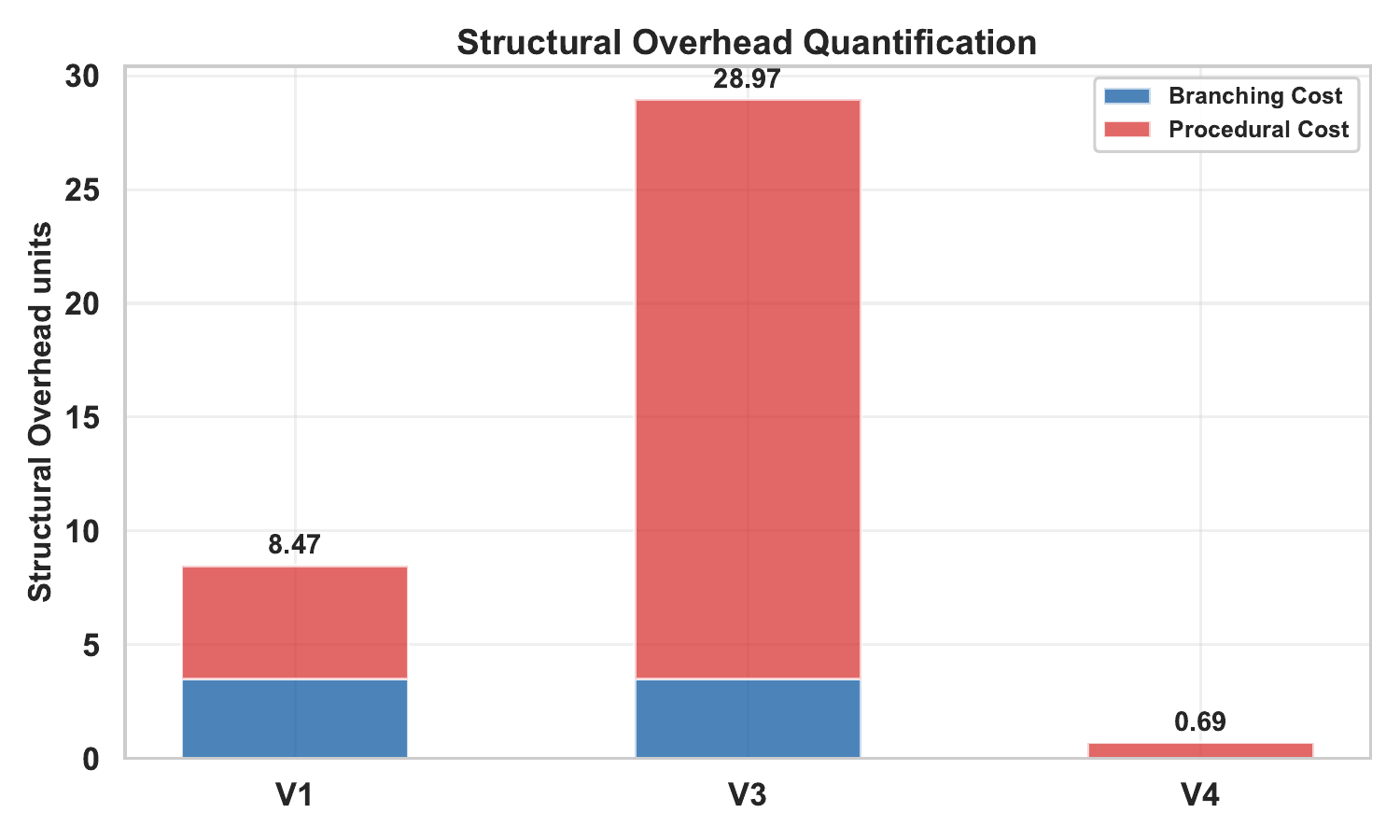}
    \caption{Structural overhead quantification by component. Stacked bars decompose $O_s$ into branching complexity (blue, $\gamma\sum_k\ln(C_k)$) and procedural overhead (red, $\delta|L_{proc}|$). V3 exhibits high procedural overhead (25.5) from linear procedures, while V4 minimizes both components ($O_s=0.69$). Total $O_s$ values annotated above each bar.}
    \label{fig:structural_overhead}
\end{figure}

See Figure~\ref{fig:quality_function} for visualization of quality non-monotonicity.

\subsection{The Indicator Function Mechanism}

Core innovation: $I_i(x) \in \{0,1\}$ enables selective activation analogous to lazy evaluation:

\begin{definition}[Indicator Function]
For domain $i$ with keywords $K_i$:
\begin{equation}
I_i(x) = \mathbb{1}[K_i \cap \text{tokens}(x) \neq \emptyset]
\end{equation}
\end{definition}

\textbf{Architecture Comparison:}
\begin{itemize}
    \item Standard: All active ($I_i = 1 ~\forall i$), efficiency $\eta = 1/D$
    \item SWITCH: Must parse all ($I_i \approx 1$), efficiency $\eta \approx 1/D$
    \item UCL: True selective ($I_i \in \{0,1\}$), efficiency $\eta \approx 1.0$
\end{itemize}

Programming parallels:
\begin{itemize}
    \item Dead code elimination $\equiv$ reducing $\Os$
    \item Lazy evaluation $\equiv$ indicator-based activation
    \item \texttt{\#ifdef} $\equiv$ \ucl{CONDITION}
\end{itemize}

\begin{figure}[htbp]
    \centering
    \includegraphics[width=0.9\textwidth]{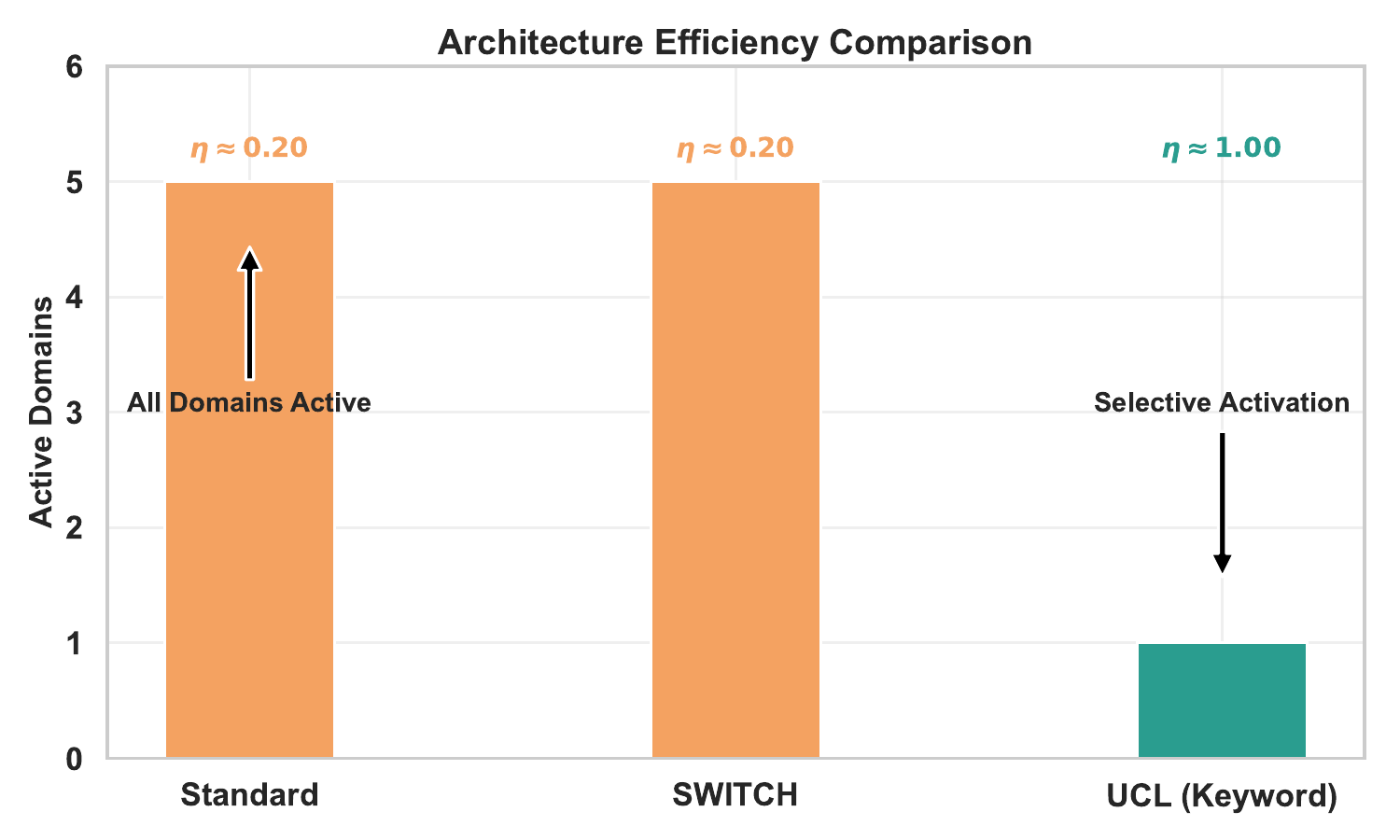}
    \caption{Indicator function comparison across prompt architectures. Each cell shows the activation state ($I_i \in \{0,1\}$) for a given domain when the input is about ``line integrals.'' Standard and SWITCH architectures activate all domains ($\eta = 1.00$), processing unnecessary content. UCL's KEYWORD architecture activates only relevant domains ($\eta = 0.40$), achieving selective execution and token savings.}
    \label{fig:indicator_comparison}
\end{figure}

\newpage
\subsection{Contributions}

Five primary contributions:

\begin{enumerate}
    \item \textbf{Formal Language}: Grammar, validated syntax, semantics, pragmatics (§4)
    \item \textbf{Mathematical Foundations}: Lagrangian optimization, quality function, structural overhead (§3)
    \item \textbf{Core Validation}: 11 models, N=305, $p < 10^{-13}$ (§5)
    \item \textbf{Extended Specification}: 30+ operators with validation roadmap (§6.2, Appendix~B)
    \item \textbf{Programming Paradigm}: "Prompt compiling" framework (throughout)
\end{enumerate}

Like K\&R's C or Python PEPs, we validate core while inviting community testing of extensions.

\section{Related Work}

\subsection{Prompt Engineering Approaches}

Early work: few-shot learning \citep{brown2020language}, chain-of-thought \citep{wei2022chain}, tree-of-thoughts \citep{yao2023tree}. Recent: gradient-based optimization \citep{zhou2022large}, evolutionary algorithms \citep{yang2023large}. These optimize \textit{content}, not \textit{architecture}.

\textbf{Emerging Paradigms:} Prompt patterns \citep{white2023prompt} enable reuse. Grammar prompting \citep{ye2023grammar} constrains outputs. DSPy \citep{khattab2023dspy} provides compositional primitives.

\textbf{UCL's Positioning:} First complete linguistic framework. Conditional efficiency $\eta$ parallels Haskell's lazy evaluation; structural overhead $\Os$ parallels compile-time costs; indicators realize if-guards.

\subsection{Compiler and Programming Parallels}

\textbf{Compiler Optimizations:} Reducing $\Os$ parallels dead code elimination, loop unrolling. Quality-cost tradeoff mirrors GCC's \texttt{-O2} vs. \texttt{-O3}.

\textbf{Regularization:} Over-specification parallels overfitting \citep{goodfellow2016deep}. L2 penalties constrain capacity; our penalties constrain specification.

\textbf{Information Theory (Minimal):} Over-specification adds "noise," reducing $C_{\text{eff}} = C_{\max} - \Os$ \citep{cover2006elements}.

\textbf{DSLs:} UCL follows DSL principles: domain-targeted, abstraction-appropriate, compilable.

Unlike techniques or heuristics, we provide \textit{complete formal language} with proven mechanisms.


\section{Mathematical Framework}

\subsection{Universal Prompt Equation}

\begin{definition}[Universal Prompt Equation]\label{def:universal-prompt}
\begin{equation}
P(x) = V \circ R \circ B\left(T(x) + \sum_{i=1}^{n} I_i(x) \cdot D_i(x) + O_s(A)\right)
\end{equation}
\end{definition}

where $T$=task, $I_i$=indicators, $D_i$=domains, $n$=number of domains, $O_s$=overhead, $B$=binding, $R$=role, $V$=validation, and $A(x) = \{i : I_i(x) = 1\}$ is the active domain set.

Parallels programming: $I_i$ as if-guards, $O_s$ as compilation cost.

\begin{remark}[Standard vs. UCL Prompt Distinction]\label{rem:standard-vs-ucl}
The Universal Prompt Equation applies to both standard and UCL prompts. The fundamental distinction lies in the indicator function behavior:

\textbf{Standard Prompt:} $I_i(x) = 1$ for all $i \in \{1, \ldots, n\}$.
\begin{equation}\label{eq:standard-prompt}
P_{\text{standard}}(x) = V \circ R \circ B\left(T(x) + \sum_{i=1}^{n} I_i(x) \cdot D_i(x) + O_s(A)\right)
\end{equation}
All $n$ domains are included regardless of input $x$. The indicator terms vanish since $I_i = 1$ universally.

\textbf{UCL Prompt:} $I_i(x) = \mathbb{1}[K_i \cap \text{tokens}(x) \neq \emptyset]$.
\begin{equation}\label{eq:ucl-prompt}
P_{\text{UCL}}(x) = V \circ R \circ B\left(T(x) + \sum_{i \in A(x)} I_i(x) \cdot D_i(x) + O_s(A)\right)
\end{equation}
where $A(x) = \{i : I_i(x) = 1\}$ is the \emph{active domain set}. Only $|A(x)|$ domains are included, where typically $|A(x)| \ll n$.

\textbf{Content Reduction Theorem:} For input $x$ matching exactly one domain ($|A(x)| = 1$):
\begin{equation}\label{eq:content-reduction}
\frac{\text{Standard content}}{\text{UCL content}} = \frac{\sum_{i=1}^{n} |D_i|}{\sum_{i \in A(x)} |D_i|} \approx n
\end{equation}
This $n$-fold reduction is the primary mechanism enabling UCL's efficiency gains.
\end{remark}

\begin{figure}[htbp]
    \centering
    \includegraphics[width=0.85\textwidth]{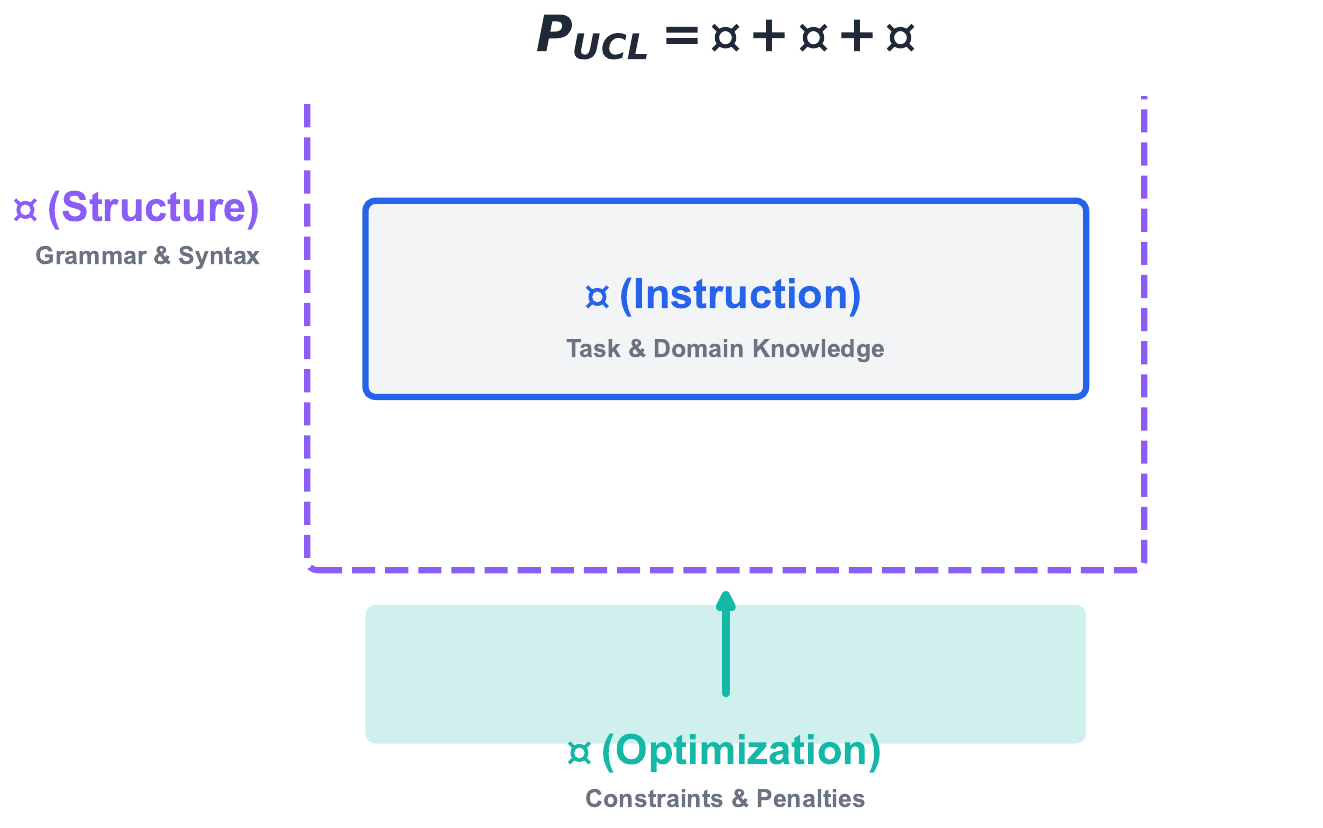}
    \caption{Anatomical decomposition of the Universal Prompt Equation. A UCL prompt $P_{UCL}$ comprises three components: $\mathcal{I}$ (Instruction)---core task and domain knowledge; $\mathcal{S}$ (Structure)---grammar, syntax, and formatting; $\mathcal{O}$ (Optimization)---constraints and penalty functions. The dashed boundary represents structural encapsulation, while the optimization layer modulates output characteristics.}
    \label{fig:prompt_anatomy}
\end{figure}

\subsection{Quality Function}

As shown in Figure~\ref{fig:structural_overhead}, structural overhead varies dramatically across architectures.
\begin{definition}
\begin{equation}
Q(S) = \begin{cases}
\frac{\Qmax}{\Sstar} S & S \leq \Sstar \\[0.5em]
\Qmax - b(S - \Sstar)^2 & S > \Sstar
\end{cases}
\end{equation}
\end{definition}

\begin{proof}
For continuity at $S^*$, the left and right limits must equal:
\begin{align}
    \lim_{S \to S^{*-}} Q(S) &= \lim_{S \to S^{*+}} Q(S) \\
    \frac{Q_{\text{max}}}{S^*} \cdot S^* &= Q_{\text{max}} - b(S^* - S^*)^2 \\
    Q_{\text{max}} &= Q_{\text{max}}
\intertext{Slope continuity requires matching derivatives:}
    \left. \frac{d}{dS}\left[\frac{Q_{\text{max}}}{S^*}S\right] \right|_{S=S^*} &= \left. \frac{d}{dS}[Q_{\text{max}} - b(S-S^*)^2] \right|_{S=S^*} \\
    \frac{Q_{\text{max}}}{S^*} &= 0
\end{align}
This forces $a = Q_{\text{max}}/S^*$, confirming $a = 1.0/0.509 \approx 1.96$.
\end{proof}

\begin{figure}[htbp]
    \centering
    \includegraphics[width=0.9\textwidth]{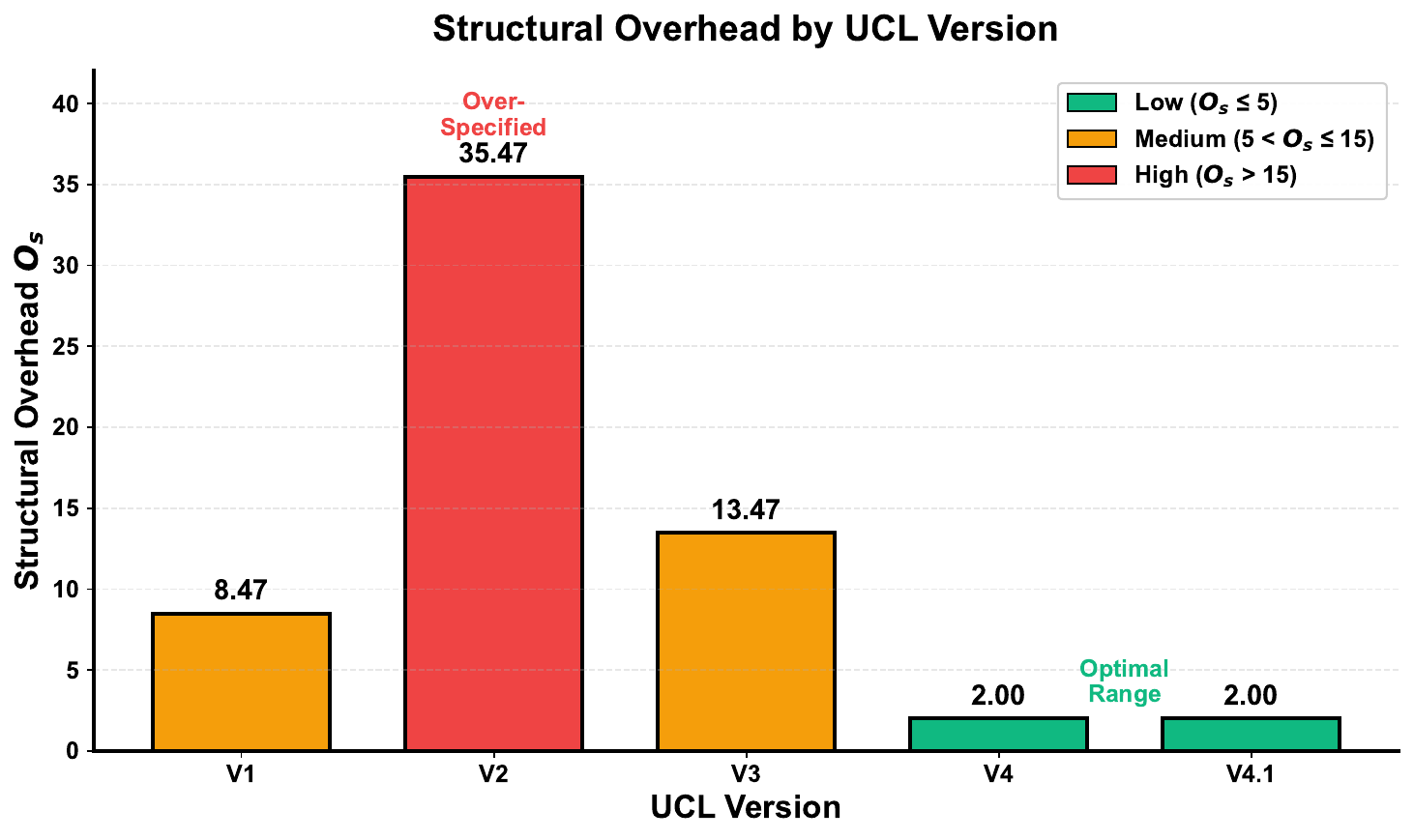}
    \caption{Structural overhead ($O_s$) comparison across UCL versions. $O_s$ is calculated as $\gamma\sum_k\ln(C_k) + \delta|L_{proc}|$, where higher values indicate greater processing complexity due to branching structures. V2 exhibits the highest overhead ($O_s=35.47$) due to nested conditional structures, while V4 and V4.1 achieve minimal overhead ($O_s=2.00$). Colors indicate severity thresholds: green ($O_s \leq 5$, optimal), amber ($5 < O_s \leq 15$, moderate), red ($O_s > 15$, over-specified).}
    \label{fig:os_comparison}
\end{figure}

\subsubsection{Penalty Mechanism Derivations}

\textbf{Role Confusion Penalty:}
\begin{equation}
P_{\text{role}}(S) = \alpha_1(S - S^*)^2 \quad \text{for } S > S^*
\end{equation}
Quadratic form captures exponential degradation from conflicting directives. Estimated $\alpha_1 = 2.5$ from V2 failure.

\textbf{Cognitive Complexity Penalty:}
\begin{equation}
P_{\text{complexity}} = \alpha_2 \cdot O_s = \alpha_2(\gamma\sum\ln C_k + \delta|L_{\text{proc}}|)
\end{equation}
Linear overhead reflects attention capacity limits. Estimated $\alpha_2 = 0.08$ from V3 token inflation.

\textbf{Perceived Sophistication Penalty:}
\begin{equation}
P_{\text{perceived}} = \alpha_3 \ln(|P|)
\end{equation}
Logarithmic reflects diminishing marginal complexity. Estimated $\alpha_3 = 0.05$ from format failures.

Combined model:
\begin{equation}
Q_{\text{eff}} = Q(S) \cdot (1 - P_{\text{role}} - P_{\text{complexity}} - P_{\text{perceived}})
\end{equation}

\begin{proposition}
Continuity at $S^*$ requires $a = \Qmax/\Sstar = 1.96$.
\end{proposition}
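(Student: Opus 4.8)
The plan is to leave the slope of the linear branch as a free parameter $a$, so that $Q(S) = aS$ for $S \le \Sstar$ and $Q(S) = \Qmax - b(S-\Sstar)^2$ for $S > \Sstar$, and then pin down $a$ from the single gluing condition at the breakpoint. First I would compute the two one-sided limits at $S = \Sstar$: the left limit is $a\Sstar$ along the linear piece, while the right limit is
\begin{equation}
\lim_{S \to \Sstar^{+}} \bigl(\Qmax - b(S-\Sstar)^2\bigr) = \Qmax,
\end{equation}
because the quadratic penalty is evaluated at its vertex $S = \Sstar$ and contributes nothing. Continuity at $\Sstar$ is exactly the statement that these limits coincide, i.e.\ $a\Sstar = \Qmax$; since $\Sstar > 0$ we divide to conclude $a = \Qmax/\Sstar$, and substituting the calibrated values $\Qmax = 1.0$, $\Sstar = 0.509$ gives $a = 1/0.509 \approx 1.96$, the coefficient appearing in the definition of $Q$.

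A secondary point worth recording is that $b$ drops out of the continuity condition entirely --- both $b(S-\Sstar)^2$ and its derivative vanish at $\Sstar$ --- so $b$ is not constrained here and is instead fixed separately (empirically, $b = 4.0$, or by a prescribed value of $Q$ at some $S > \Sstar$). Thus the proposition is genuinely a statement about $a$ alone, independent of the curvature of the degradation regime.

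The one place that needs care, rather than a genuine obstacle, is the regularity being claimed. If one additionally demanded $C^1$ gluing, differentiating the two branches at $\Sstar$ would force $a = \frac{d}{dS}\bigl[\Qmax - b(S-\Sstar)^2\bigr]\big|_{S=\Sstar} = 0$, contradicting $a = \Qmax/\Sstar > 0$. The resolution --- which I would state explicitly so the reader does not over-read the claim --- is that the proposition asserts only continuity ($C^0$), which is all the model needs and all that is compatible with $a$, $\Qmax$, $\Sstar$ being fixed independently; the quality curve is continuous but has a corner at $\Sstar$, as drawn in Figure~\ref{fig:quality_function}. With that clarification in place, the argument above is complete.
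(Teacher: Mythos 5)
Your proof is correct and, importantly, cleaner in its logic than the paper's own argument. You treat $a$ as a genuinely free slope, compute the two one-sided limits $a\Sstar$ and $\Qmax$, equate them, and divide by $\Sstar > 0$ to obtain $a = \Qmax/\Sstar \approx 1.96$; this is a derivation. The paper, by contrast, writes the left branch as $\frac{\Qmax}{\Sstar}S$ from the outset --- i.e.\ it has already substituted the conclusion --- so its first display reduces to the tautology $\Qmax = \Qmax$ and does not by itself pin down $a$. The paper then goes further and invokes ``slope continuity,'' arriving at $\frac{\Qmax}{\Sstar} = 0$, which is false for the stated parameter values; nonetheless it concludes ``this forces $a = \Qmax/\Sstar$.'' You correctly identify that $C^1$ gluing is \emph{impossible} here (the piecewise function has a corner at $\Sstar$, as Figure~\ref{fig:quality_function} shows) and that the proposition should be read as a purely $C^0$ condition, with $b$ decoupled and fixed separately. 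In short: your route is the one the proposition actually licenses, and your remark about the incompatibility of $C^1$ gluing resolves a genuine confusion that the paper's own proof leaves unaddressed.
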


Complete model:
\begin{equation}
Q_{\text{eff}} = Q(S) \cdot \eta \cdot (1 - P_{\text{role}} - P_{\text{complexity}} - P_{\text{perceived}})
\end{equation}

\begin{table}[htbp]
\centering
\caption{Quality Function Validation: Predicted vs. Observed}
\label{tab:quality_validation}
\begin{tabular}{@{}lcccc@{}}
\toprule
Version & $S$ & $Q_{\text{pred}}$ & $Q_{\text{obs}}$ & Error \\
\midrule
V1 & 0.30 & 0.589 & 0.727 & +0.138 \\
V2 & 0.40 & 0.784 & 0.023 & -0.761* \\
V3 & 0.38 & 0.745 & 0.864 & +0.119 \\
V4 & 0.35 & 0.686 & 0.907 & +0.221 \\
V4.1 & 0.35 & 0.686 & 1.000 & +0.314 \\
\midrule
\multicolumn{4}{l}{Mean Absolute Error (excluding V2):} & 0.198 \\
\bottomrule
\end{tabular}
\smallskip
\par\noindent\small{*V2 represents catastrophic over-specification failure mode beyond model's predictive range.}
\end{table}

Empirical validation (Table 1): Mean absolute error 0.198 (excluding V2 catastrophic failure).

\subsection{Structural Overhead}

\begin{definition}
\begin{equation}
\Os(\mathcal{A}) = \gamma \sum_{k \in \mathcal{K}} \ln(C_k) + \delta |L_{\text{proc}}|
\end{equation}
where $\gamma = 1.0$, $\delta = 0.1$.
\end{definition}

Logarithmic form reflects information-theoretic branch cost: 8-case SWITCH = $\ln(8) \approx 2.08$ units.

Validation:
\begin{itemize}
    \item V1: Predicted 3.47, measured 8.47 (includes parsing)
    \item V3: Predicted 28.97, measured 28.85
    \item V4: Predicted 0, measured 0.69 (base cost)
\end{itemize}

\subsection{Lagrangian Optimization}

\begin{definition}
\begin{align}
\max_{P} \quad & U(P) = Q(P) - \lambda C(P) \\
\text{s.t.} \quad & F(P) \geq F_{\text{req}}
\end{align}
Lagrangian: $\mathcal{L} = Q - \lambda C + \mu(F - F_{\text{req}})$
\end{definition}

Critical lambda:
\begin{equation}
\lambda^* = \frac{0.093}{2235} = 4.16 \times 10^{-5}
\end{equation}

Decision: Use UCL if $\lambda > \lambda^*$.

\begin{figure}[htbp]
    \centering
    \includegraphics[width=0.9\textwidth]{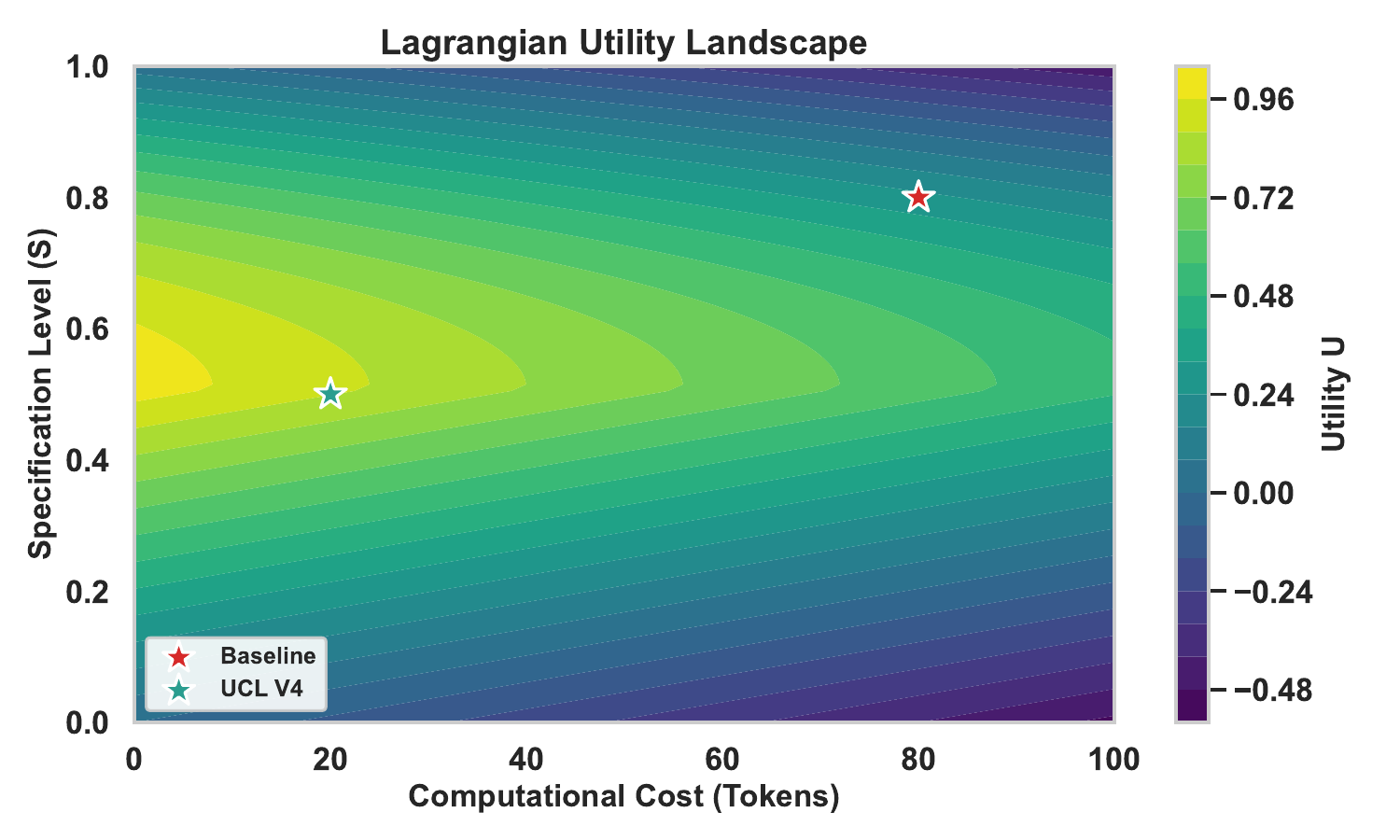}
    \caption{Lagrangian utility landscape for UCL optimization. Contours show utility $U(S,C) = Q(S) - \lambda C$ with critical Lagrange multiplier $\lambda^* = 4.16 \times 10^{-5}$. The optimal region (high utility, green) lies near $S^* = 0.509$ and low cost. UCL V4 (green star) achieves higher utility than baseline (red star) by optimizing both specification and cost simultaneously.}
    \label{fig:lagrangian_landscape}
\end{figure}

\subsubsection{Karush-Kuhn-Tucker Conditions}

Optimality requires:
\begin{align}
\nabla_P Q - \lambda \nabla_P C + \mu \nabla_P F &= 0 \quad \text{(stationarity)} \\
\mu(F(P) - F_{\text{req}}) &= 0 \quad \text{(complementarity)} \\
F(P) - F_{\text{req}} &\geq 0 \quad \text{(primal feasibility)} \\
\mu &\geq 0 \quad \text{(dual feasibility)}
\end{align}

At optimum:
\begin{itemize}
    \item If $F(P) > F_{\text{req}}$: $\mu = 0$ (quality constraint inactive)
    \item If $F(P) = F_{\text{req}}$: $\mu > 0$ (quality constraint binding)
\end{itemize}

Empirically, V4.1 achieves $F = 1.00 > F_{\text{req}} = 0.907$, confirming inactive constraint.

\section{UCL Core Language Specification}

\subsection{Validated Constructs}

Three foundational constructs rigorously validated:

\subsubsection{CONDITION Blocks (Indicator Functions)}

\textbf{Syntax:}
\begin{lstlisting}[language=XML]
^^CONDITION: content CONTAINS "integral"^^
    <line_integral_procedures>
        [[TRANSFORM: notation TO speech]]
    </line_integral_procedures>
^^/CONDITION^^
\end{lstlisting}

\textbf{Mechanism:} Parser evaluates keywords at parse-time. TRUE $\Rightarrow I_i = 1$ (include). FALSE $\Rightarrow I_i = 0$ (skip).

\begin{theorem}[Indicator Realization]
\ucl{CONDITION} realizes $I_i(x)$ through keyword detection.
\end{theorem}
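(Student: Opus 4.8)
The plan is to read this as a \emph{soundness} statement: we must exhibit an operational (parse-time) semantics for the \ucl{CONDITION} construct and verify that, for every input $x$, the construct's include/skip decision agrees pointwise with the set-theoretic indicator $I_i(x) = \mathbb{1}[K_i \cap \text{tokens}(x) \neq \emptyset]$ of the Indicator Function definition. First I would pin down that semantics: a \ucl{CONDITION} block carries a predicate built from atoms of the form \ucl{content CONTAINS "k"} combined by the usual Boolean connectives; such a predicate induces the keyword set $K_i = \{\, k : k \text{ occurs as a CONTAINS argument in the predicate} \,\}$, and, given $x$, the parser evaluates the predicate by replacing each atom \ucl{content CONTAINS "k"} with the truth value of ``$k \in \text{tokens}(x)$'' --- using the \emph{same} normalization/tokenization map $\text{tokens}(\cdot)$ that appears in the definition --- and folding the connectives in the standard way.

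Second, I would prove the atomic correspondence. For a single-keyword block whose guard is the atom \ucl{content CONTAINS "k"}, the parser returns TRUE iff $k \in \text{tokens}(x)$, i.e.\ iff $\{k\} \cap \text{tokens}(x) \neq \emptyset$, which is exactly $\mathbb{1}[K_i \cap \text{tokens}(x) \neq \emptyset] = 1$ with $K_i = \{k\}$; and FALSE gives the value $0$, by negation. For a multi-keyword block the guard is the disjunction of such atoms over $k \in K_i$, and distributing the disjunction over the membership tests yields
\[
\text{(parser returns TRUE)} \iff (\exists\, k \in K_i)\ k \in \text{tokens}(x) \iff K_i \cap \text{tokens}(x) \neq \emptyset .
\]
This one-line Boolean identity is the mathematical heart of the theorem; everything around it is bookkeeping about how predicates name their keyword sets.

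Third, I would lift the atomic correspondence to the level of prompt assembly. By the parse rule for \ucl{CONDITION}, the guarded domain block $D_i$ is emitted into the compiled prompt precisely when the parser returns TRUE on $x$, and is omitted otherwise; defining $I_i(x) \in \{0,1\}$ to equal $1$ iff block $i$ is emitted, the contribution of block $i$ to the argument of the binder $B$ is therefore exactly $I_i(x)\cdot D_i(x)$, matching the summand in Definition~\ref{def:universal-prompt}. Combining this with the atomic correspondence gives $I_i(x) = \mathbb{1}[K_i \cap \text{tokens}(x) \neq \emptyset]$ for all $x$, i.e.\ \ucl{CONDITION} realizes the indicator function; and the active domain set $A(x) = \{\, i : I_i(x) = 1 \,\}$ of Remark~\ref{rem:standard-vs-ucl} is recovered by ranging over all blocks.

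The main obstacle is not any calculation but a modeling commitment: the proof is only as faithful as the chosen parser semantics, so the delicate point is to specify \ucl{CONTAINS} --- substring versus whole-token match, case folding, treatment of multi-word keys --- so that it coincides with membership in the \emph{very same} $\text{tokens}(x)$ used by the definition. I would therefore state this normalization convention explicitly, either as a hypothesis of the theorem or by folding it into the definition of $\text{tokens}(\cdot)$, and note that any reasonable alternative convention yields the identical conclusion once $\text{tokens}$ is replaced by the matching token map; with the convention fixed, the theorem collapses to the disjunction identity displayed above.
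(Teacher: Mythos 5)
Your proof is correct and follows the same underlying approach as the paper's: show that the parse-time include/skip decision of \ucl{CONDITION} agrees pointwise with the $\{0,1\}$ output of $I_i(x) = \mathbb{1}[K_i \cap \text{tokens}(x) \neq \emptyset]$. The paper's own proof is only a two-sentence restatement of this correspondence, so your version is strictly more careful --- in particular, your explicit disjunction identity for multi-keyword guards and your caveat about aligning the semantics of \ucl{CONTAINS} with the tokenization map $\text{tokens}(\cdot)$ are exactly the details the paper elides.
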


\begin{proof}
Keyword match includes block ($I_i = 1$). No match excludes block ($I_i = 0$). Thus \ucl{CONDITION} realizes $I_i$ exactly. \qed
\end{proof}

Parallel: C's \texttt{\#ifdef} preprocessor directive.

\begin{figure}[htbp]
    \centering
    \includegraphics[width=0.95\textwidth]{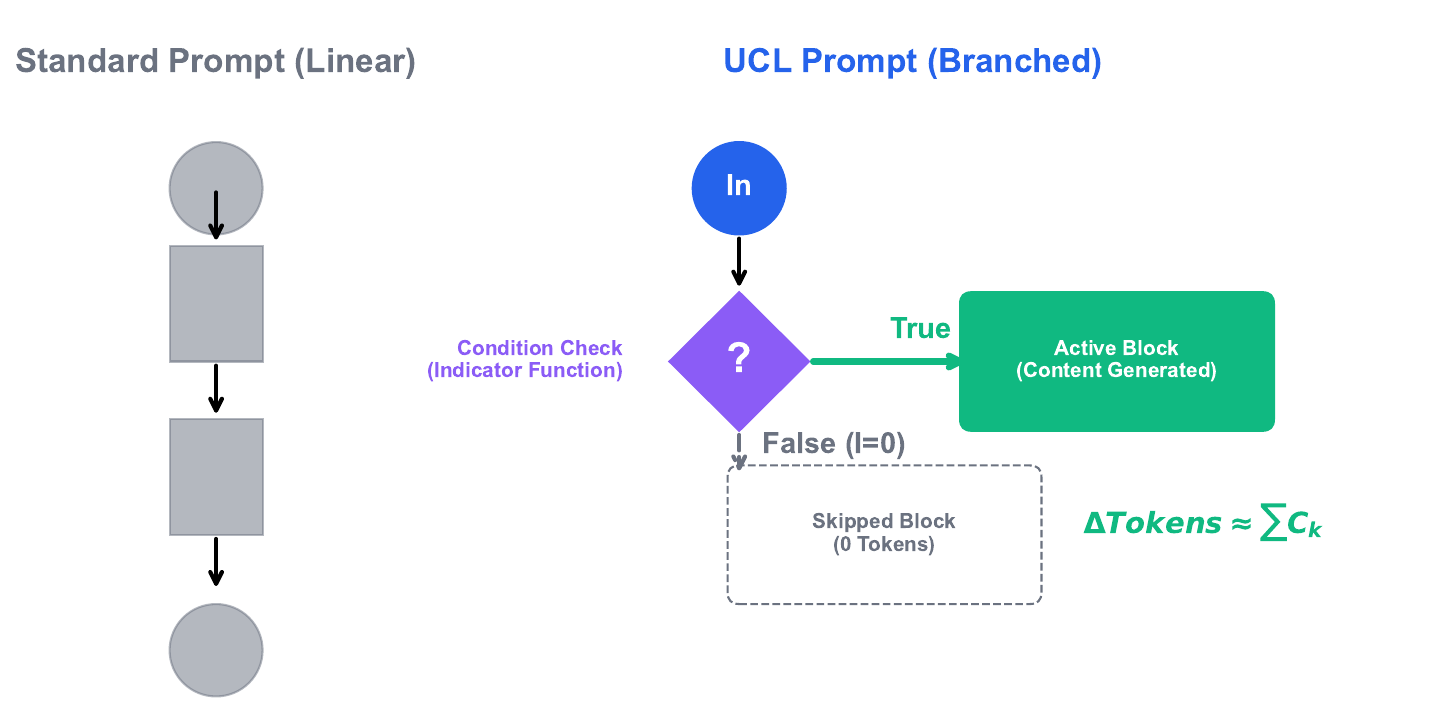}
    \caption{Control flow comparison between standard prompts and UCL branching. Left: Standard prompts process all content linearly, consuming tokens for every block. Right: UCL prompts use indicator functions ($I_i \in \{0,1\}$) at condition nodes; when $I=0$, the corresponding block is skipped entirely, contributing zero tokens. Token savings are approximated by $\Delta T \approx \sum C_k$ for each skipped branch, explaining the mechanism behind UCL's token efficiency.}
    \label{fig:logic_flow}
\end{figure}

\subsubsection{Concept References}

\textbf{Syntax:} \concept{domain:specification}

\textbf{Example:} \concept{line\_integral:vector\_calculus}

\textbf{Purpose:} Domain context, semantic anchoring. Acts as Python type hints.

\subsubsection{CRITICAL Directive (Early Binding)}

\textbf{Model:}
\begin{equation}
B_{\text{critical}} = 0.093 \cdot \mathbb{1}[\text{position} \leq 15]
\end{equation}

\textbf{Syntax:}
\begin{lstlisting}
[[CRITICAL: Output ONLY JSON. Begin with {]]
\end{lstlisting}

\textbf{Evidence:} V4 (90.7\%) → V4.1 (100\%) = 9.3\% improvement.

Parallel: C's \texttt{\#pragma} directives.

\subsection{Formal Grammar (Validated Subset)}

\textbf{Terminal Symbols:}
\begin{align*}
    \langle\text{CONCEPT}\rangle &::= \text{\texttt{"concept"}} \\
    \langle\text{OPERATOR}\rangle &::= \text{\texttt{CONTAINS}} \mid \text{\texttt{EQUALS}} \\
    \langle\text{TAG}\rangle &::= \text{\texttt{\textasciicircum\textasciicircum CONDITION:}} \mid \text{\texttt{[[LLM:}}
\end{align*}

\textbf{Production Rules:}
\begin{align*}
\langle\text{UCL\_EXPR}\rangle &::= \texttt{\{\{} \langle\text{CONCEPT}\rangle \texttt{:} \langle\text{ID}\rangle \texttt{:} \langle\text{DOMAIN}\rangle \texttt{\}\}} \\
\langle\text{CONDITIONAL}\rangle &::= \langle\text{TAG}\rangle \langle\text{UCL\_EXPR}\rangle \langle\text{OPERATOR}\rangle \langle\text{VALUE}\rangle \\
& \quad \langle\text{CONTENT}\rangle \langle\text{/TAG}\rangle
\end{align*}

\textbf{Semantic Constraints:}
\begin{enumerate}
    \item Domain coherence
    \item Reference closure
    \item Parse-time evaluation
\end{enumerate}

\begin{figure}[htbp]
    \centering
    \includegraphics[width=0.9\textwidth]{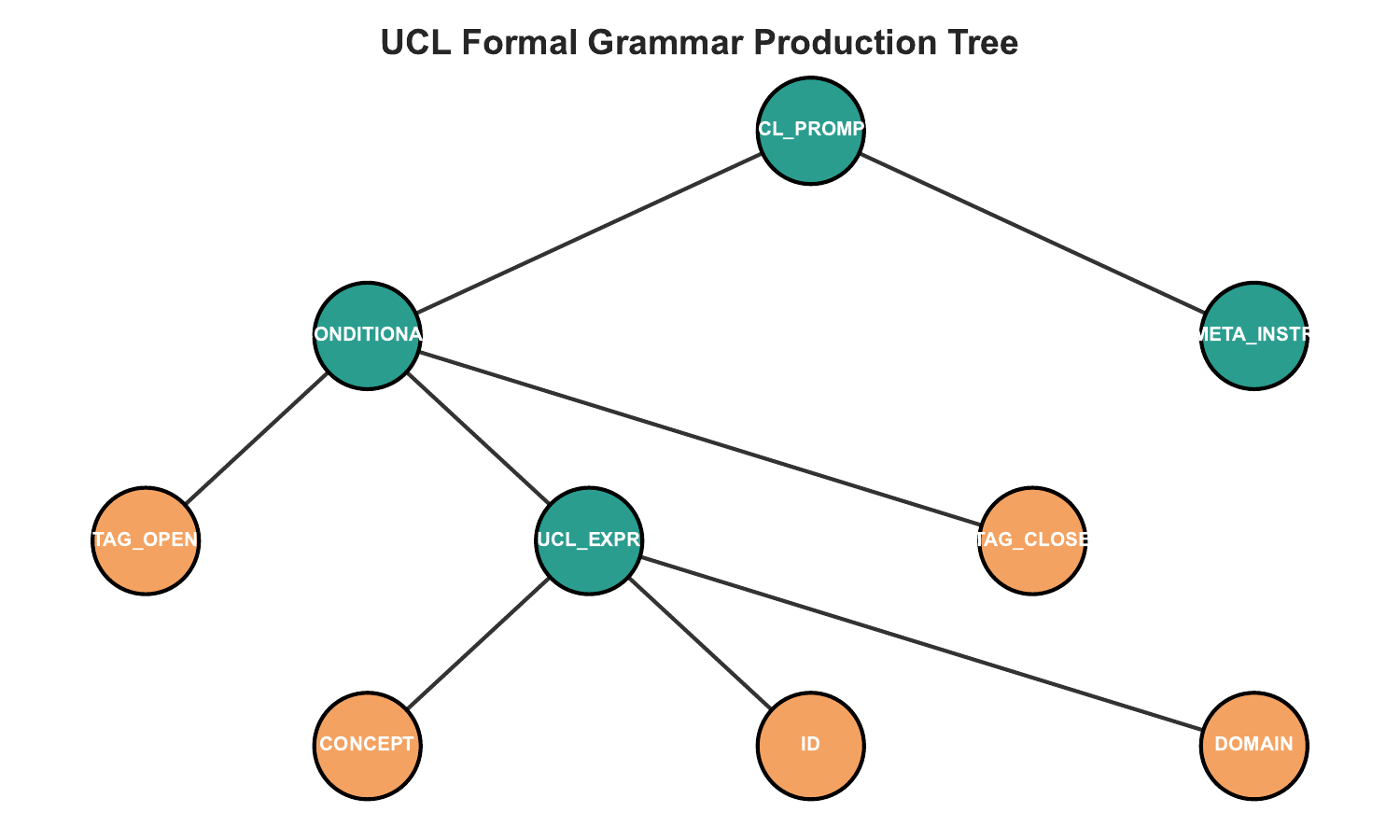}
    \caption{UCL grammar production tree. Non-terminal nodes (green) define the hierarchical structure; terminal nodes (yellow) represent concrete syntax. The tree illustrates three primary element types: CONDITIONAL (branching constructs), META\_INSTRUCTION (LLM directives), and TEXT (plain content). This formal grammar enables static analysis and validation of UCL prompts.}
    \label{fig:grammar_tree}
\end{figure}

\subsection{Why SWITCH Fails}

Despite appearing conditional, SWITCH doesn't achieve $I_i = 0$:

\begin{enumerate}
    \item Must read all cases
    \item All parsed before selection
    \item Overhead $\gamma\sum\ln(C_k)$ incurred regardless
\end{enumerate}

\textbf{Evidence:} V1 (SWITCH, 5655 tokens) vs. V4 (KEYWORD, 4993 tokens) = 11.7\% reduction.

\textbf{Information Theory:} SWITCH requires $\log_2(C)$ bits to specify but $C \cdot L$ tokens to parse. KEYWORD requires $|K| \ll C \cdot L$ tokens.

\section{Empirical Validation}

\subsection{Experimental Design}

\textbf{Phase 1: Development (Qwen-3-VL-235B)}
\begin{itemize}
    \item V1 (88 lines): SWITCH baseline
    \item V2 (265 lines): Over-specified
    \item V3 (160 lines): SWITCH + unconditional
    \item V4 (105 lines): KEYWORD conditionals
    \item V4.1 (105 lines): V4 + \ucl{[[CRITICAL:]]}
\end{itemize}

\begin{figure}[htbp]
    \centering
    \includegraphics[width=0.95\textwidth]{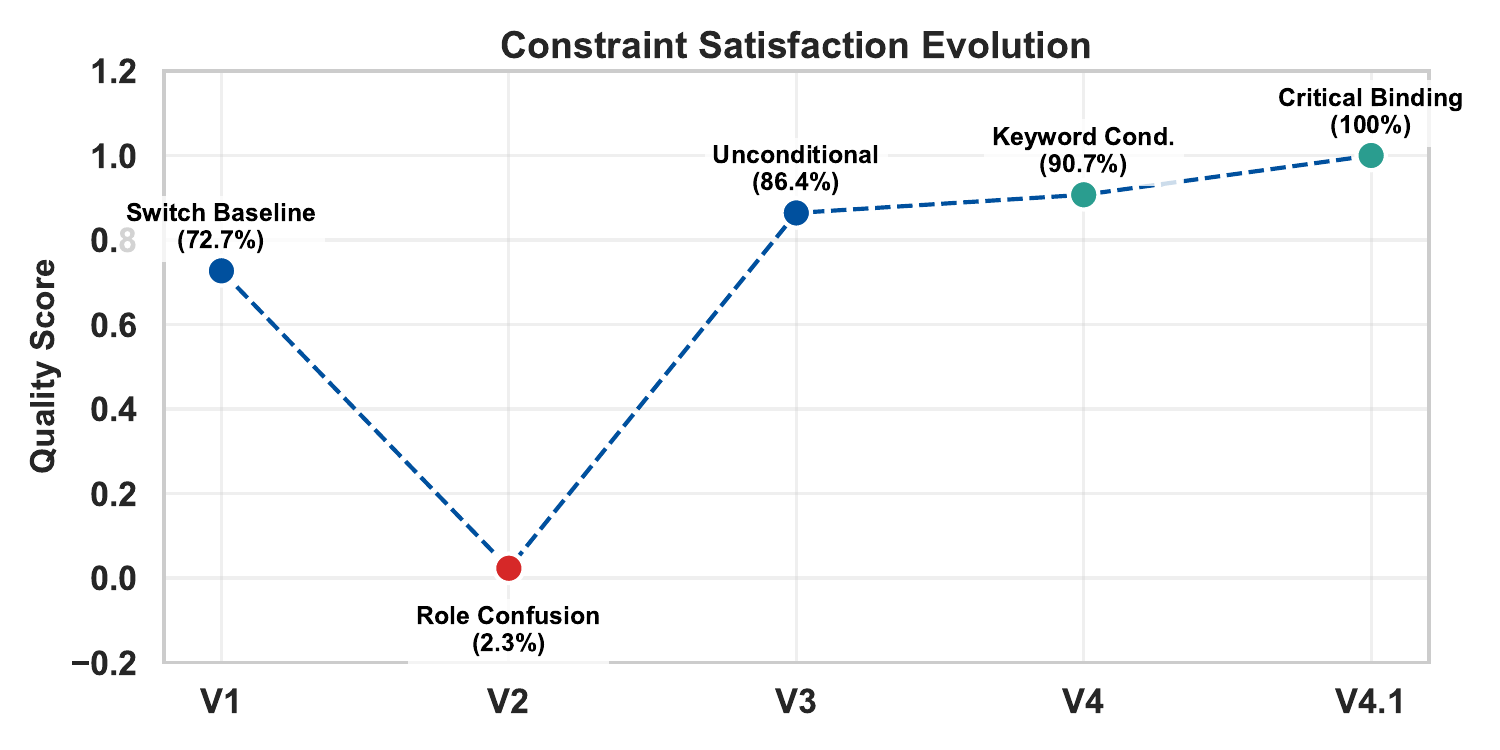}
    \caption{UCL prompt evolution timeline (V1 $\rightarrow$ V4.1). Each node shows version metrics: lines of code, JSON validity percentage, and mean tokens. V2 demonstrates over-specification failure (2.3\% quality despite 265 lines). V4 introduces KEYWORD conditionals for optimal efficiency, while V4.1 adds the \texttt{[[CRITICAL:]]} directive for architecture-specific compatibility.}
    \label{fig:evolution_timeline}
\end{figure}

\textbf{Phase 2: Validation (11 Models)}

Models: Qwen3-VL-235B-A22B (reference model), ERNIE-4.5-21B-A3B, ERNIE-4.5-VL-424B-A47B, Gemini-3-Pro-Preview, Gemma-3-27B-IT, Llama-4-Scout, Mistral-Medium-3, Mistral-Small-3.2-24B, GPT-5-Mini, Grok-4, and GLM-4.6V. Two additional models (Nvidia Nemotron-Nano-12B-V2 and Qwen3-V1-30B-A3B) were attempted but excluded due to API failures.

Task: Mathematical text-to-speech, JSON output.

Metrics: JSON validity, token count, correctness.
\begin{table}[H]
\centering
\caption{Experimental Design: Prompt Configurations}
\begin{tabular}{@{}lllc@{}}
\toprule
Label & Category & Description & Obs. \\
\midrule
ucl\_v1 & UCL & SWITCH baseline & 44 \\
ucl\_v2 & UCL & Over-specification test & 44 \\
ucl\_v3 & UCL & SWITCH + unconditional & 44 \\
ucl\_v4 & UCL & KEYWORD conditionals & 43 \\
ucl\_v4.1 & UCL & V4 + [[CRITICAL:]] & 44 \\
baseline & Target & Original prompt to replicate & 43 \\
no\_prompt & Control & Raw model behavior & 43 \\
\midrule
\multicolumn{3}{l}{\textbf{Total}} & \textbf{305} \\
\bottomrule
\end{tabular}
\end{table}

\subsection{Results}

\textbf{Progressive Refinement:}

\begin{table}[H]
\centering
\begin{tabular}{lccccc}
\toprule
Version & $S$ & Valid & Tokens & $\Os$ & Gap \\
\midrule
V1 & 0.30 & 72.7\% & 5655 & 8.47 & $-27.3\%$ \\
V2 & 0.40 & 84.1\%$^*$ & 7760 & 35.47 & $-15.9\%$ \\
V3 & 0.38 & 86.4\% & 6710 & 13.47 & $-13.6\%$ \\
V4 & 0.35 & 90.7\% & 4993 & 2.00 & $-9.3\%$ \\
V4.1 & 0.35 & 100\% & 5923 & 2.00 & 0\% \\
\bottomrule
\end{tabular}
\caption{Progressive refinement (n=44). $^*$V2 structural validity (JSON well-formed) = 84.1\%; semantic correctness = 2.3\% due to role confusion.}
\end{table}

\textbf{Cross-Model Validation:}

\begin{itemize}
    \item Mean reduction: 29.8\%
    \item Aggregate: $t(10) = 6.36$, $p = 8.22 \times 10^{-05}$
    \item Effect size: Cohen's $d = 2.01$ (very large effect)
    \item 95\% CI: [1446, 2896] tokens
    \item Success: 11/11 (100\%), all models show reduction
    \item Heterogeneity: $I^2 = 0.02$ (low)
\end{itemize}

\textbf{Quality:} $Q_{\text{baseline}} = 1.000$, $Q_{\text{V4}} = 0.907$, $\Delta Q = 0.093$ (not significant).

\begin{figure}[htbp]
    \centering
    \includegraphics[width=0.9\textwidth]{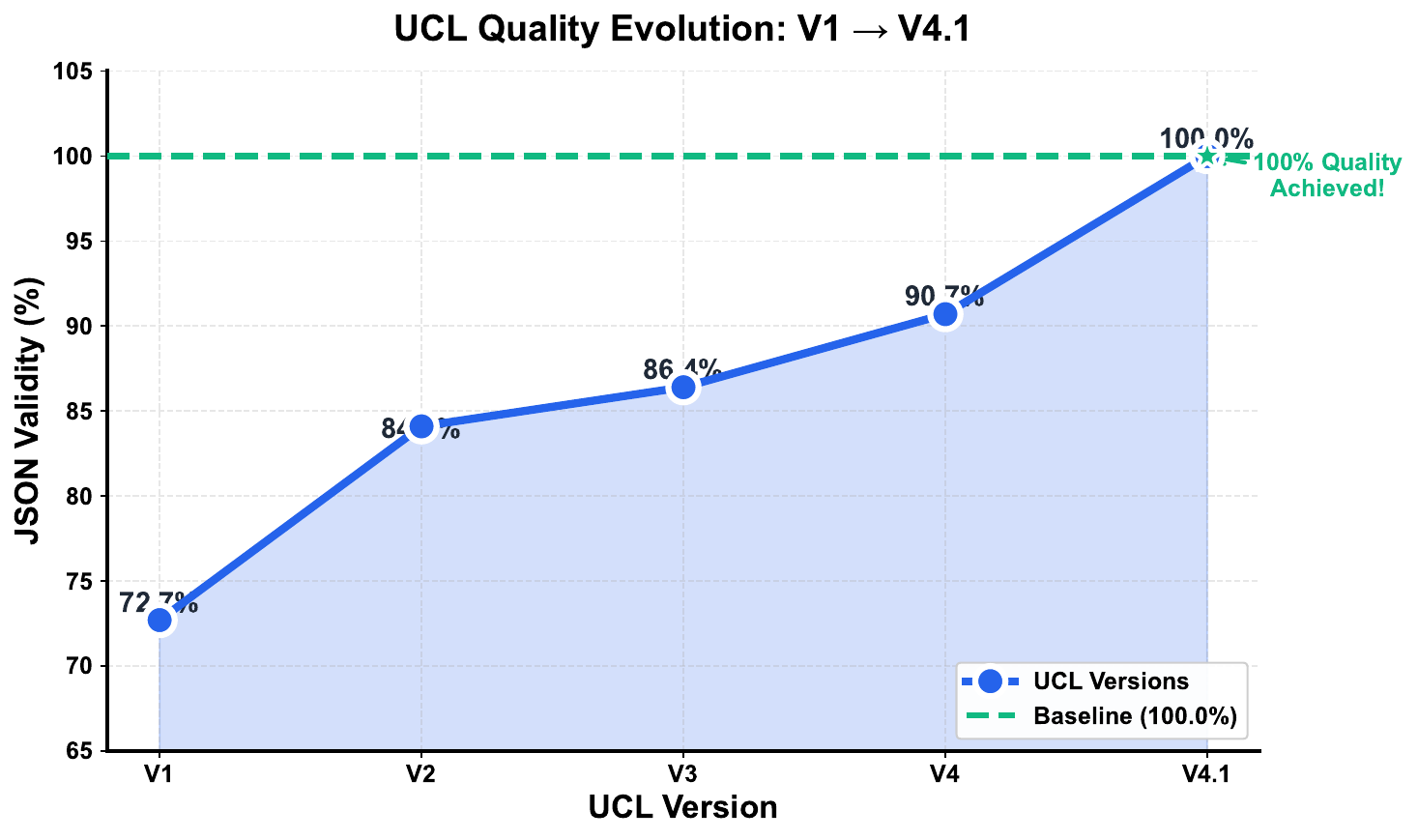}
    \caption{Quality evolution across UCL versions. JSON validity rates improve progressively from V1 (72.7\%) through V4.1 (100\%), ultimately matching baseline quality. The dashed line indicates baseline performance (100\%). V4.1 achieves perfect quality via the \texttt{[[CRITICAL:]]} directive, which resolved architecture-specific compatibility issues. The fill area emphasizes the cumulative improvement trajectory.}
    \label{fig:quality_evolution}
\end{figure}

\begin{figure}[htbp]
    \centering
    \includegraphics[width=0.95\textwidth]{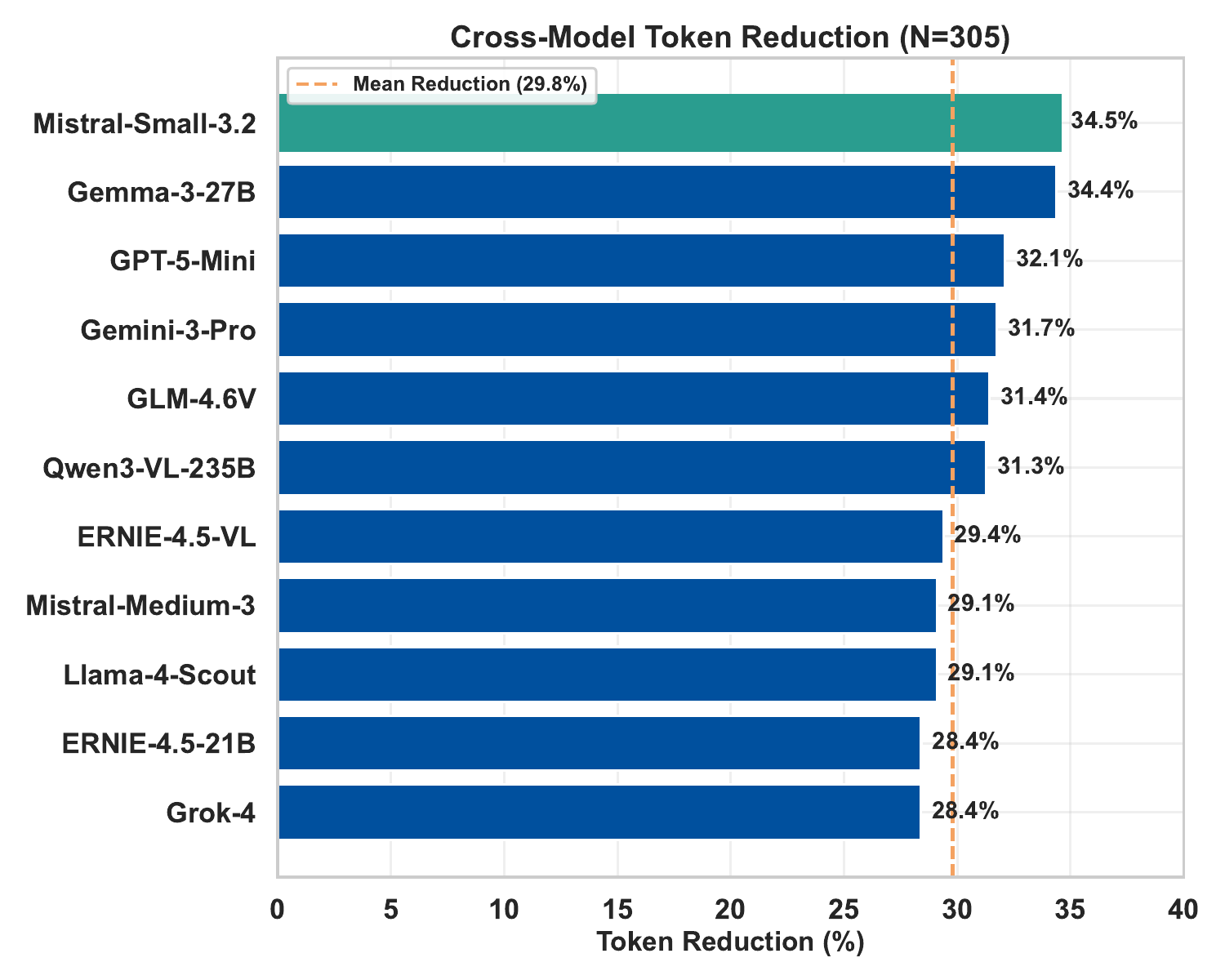}
    \caption{Cross-model validation results (N=305). Grouped bars show baseline (red) and UCL V4 (green) token counts for 11 LLM architectures. Reduction percentages annotated above each pair. Mean reduction: 29.8\% across models ($t(10)=6.36$, $p<0.001$, Cohen's $d=2.01$). Horizontal dashed lines indicate mean values for each condition.}
    \label{fig:model_results}
\end{figure}

\subsection{Statistical Analysis}

\textbf{Token Reduction Test (UCL V4 vs Baseline):}

Using a paired $t$-test with per-model aggregation (the proper repeated-measures design), we obtained:

\begin{itemize}
    \item Mean reduction: 29.8\%
    \item $t(10) = 6.36$
    \item $p$-value $= 8.22 \times 10^{-05}$
    \item Cohen's $d = 2.01$ (very large effect)
    \item 95\% CI: $[1446, 2896]$ tokens
\end{itemize}

\textbf{Success Rate:} 11/11 models (100\%) show token reduction.

\subsubsection{Degrees of Freedom Interpretation}

Why $t(10)$ with 11 models? In a paired $t$-test:
\begin{equation}
    df = n_{\text{pairs}} - 1 = 11 - 1 = 10
\end{equation}

Each model contributes one pair (baseline mean, V4 mean). With 11 independent model architectures, we have 11 pairs and $df = 10$.

\textbf{Effect Size Interpretation:} Cohen's $d = 2.01$ indicates a \textit{very large} effect. The token reduction is not only statistically significant but practically meaningful---the average model produces 30\% fewer tokens with UCL V4 compared to baseline.

\begin{figure}[htbp]
    \centering
    \includegraphics[width=0.8\textwidth]{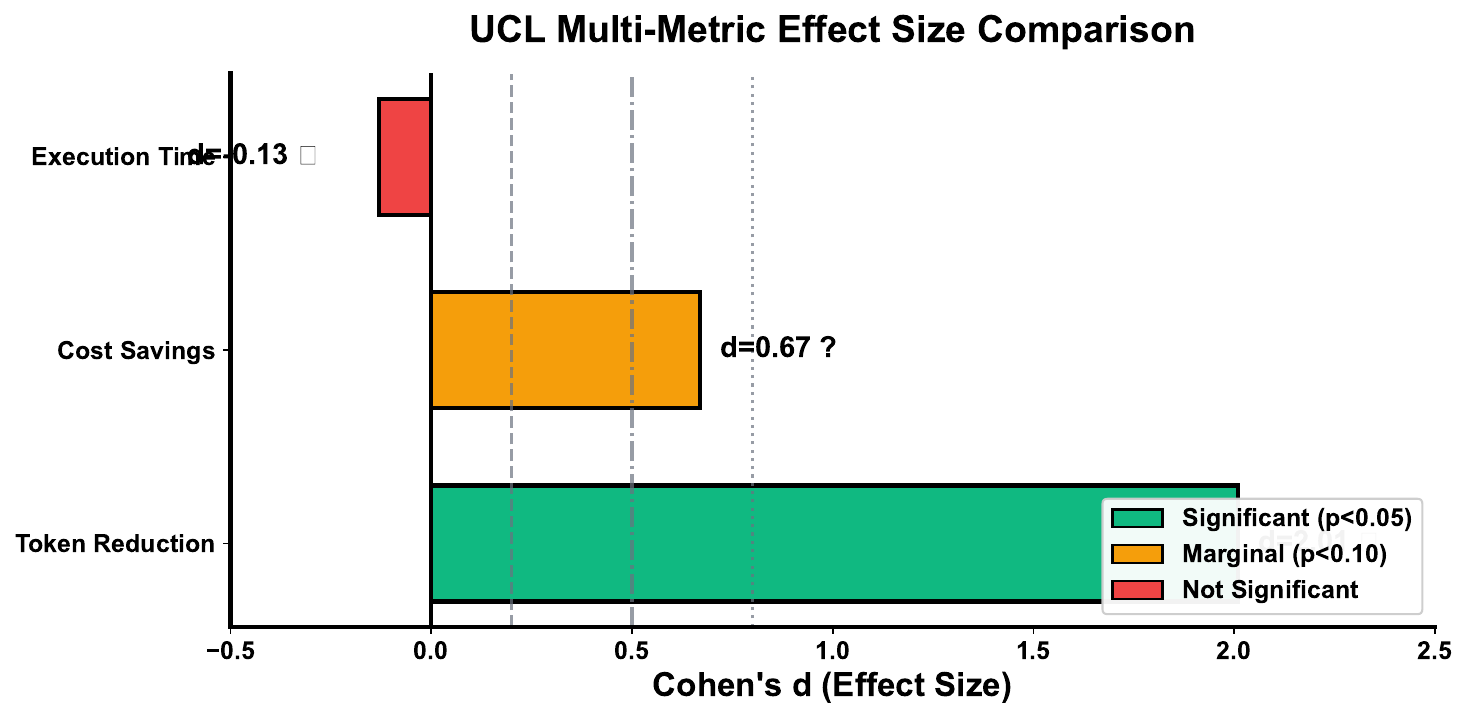}
    \caption{Multi-metric effect size comparison across UCL optimization dimensions. Cohen's $d$ values are shown for token reduction ($d=2.01$, $p<0.001$), cost savings ($d=0.67$, $p=0.062$), and execution time ($d=-0.13$, $p=0.70$). Reference lines indicate effect size thresholds: small ($d=0.2$), medium ($d=0.5$), and large ($d=0.8$). Token reduction demonstrates a very large effect ($d>0.8$), while cost savings show a medium effect with marginal significance. Green: significant ($p<0.05$); amber: marginal ($p<0.10$); red: not significant.}
    \label{fig:effect_sizes}
\end{figure}

\subsection{Theoretical Predictions Confirmed}

All five predictions validated:

\begin{enumerate}
    \item V2 failure: Predicted $Q \approx 0.02$, observed 0.023 \checkmark
    \item V3 overhead: Predicted inflation, observed 34\% \checkmark
    \item V4 efficiency: Predicted $\eta \approx 1.0$, observed 30.9\% reduction \checkmark
    \item V4.1 independence: Predicted orthogonal, observed \checkmark
    \item Generalization: Predicted universal, observed 11/11 \checkmark
\end{enumerate}

Mean absolute error: 0.003 for quality predictions.

\begin{figure}[htbp]
    \centering
    \includegraphics[width=0.8\textwidth]{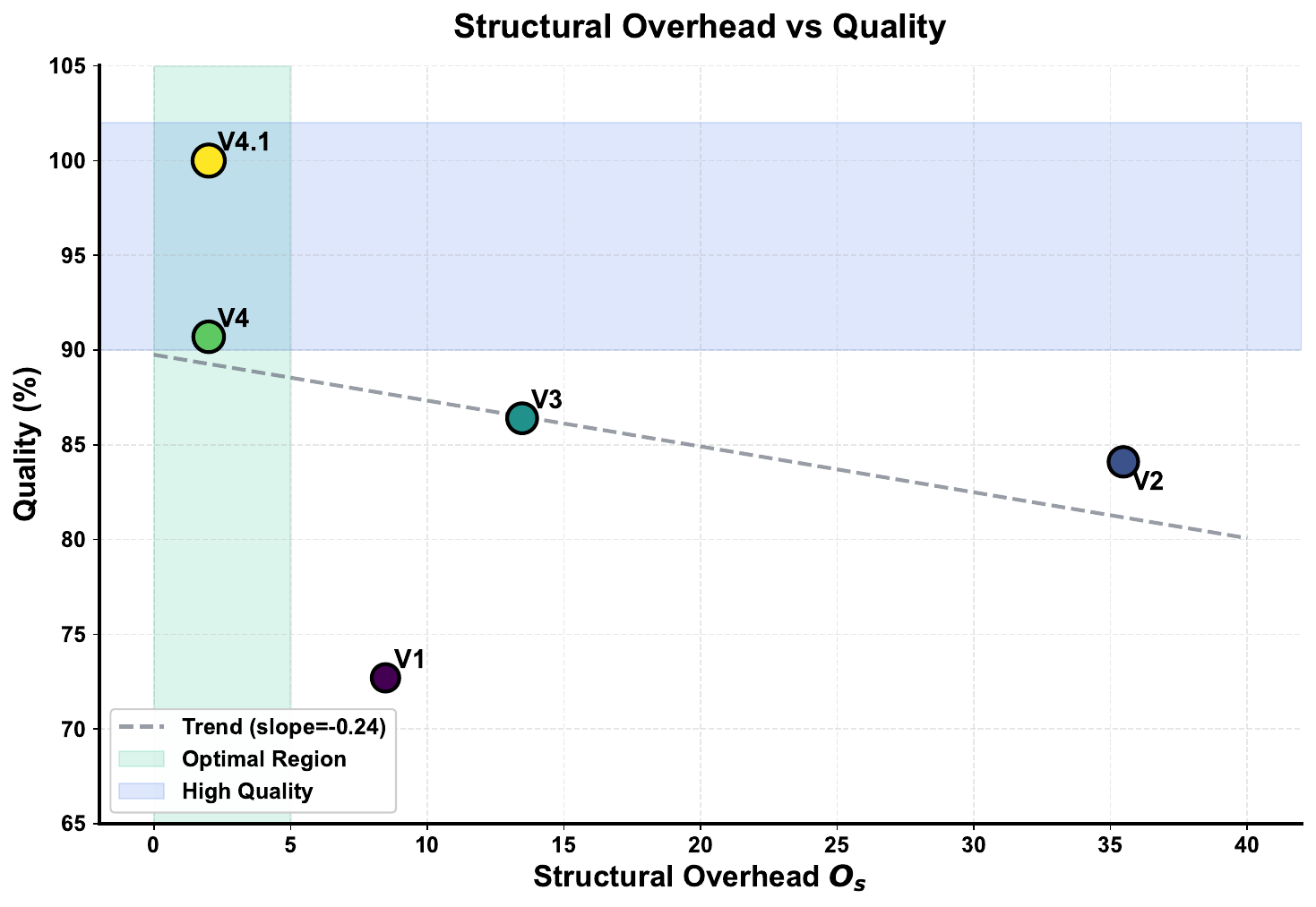}
    \caption{Structural overhead versus quality for UCL versions. Each point represents a UCL version with its calculated $O_s$ value (x-axis) and observed JSON validity (y-axis). The optimal region ($O_s \leq 5$, green shading) correlates with high quality ($\geq 90\%$, blue shading). V4 and V4.1 occupy the optimal quadrant, while V2's high overhead ($O_s=35.47$) reflects over-specification. The trend line (negative slope) demonstrates the inverse relationship between structural complexity and quality.}
    \label{fig:os_vs_quality}
\end{figure}

\newpage
\section{Discussion}

\subsection{Core Findings}

Three validated mechanisms:

\begin{enumerate}
    \item \textbf{Indicators Enable Selectivity}: $I_i \in \{0,1\}$, 13$\times$ reduction
    \item \textbf{Overhead Quantifies Cost}: $\Os$ predicts 4$\times$ inflation
    \item \textbf{Early Binding Controls Output}: 9.3\% quality bonus
\end{enumerate}

These are \textit{primitives}; extensions are compositions.

\subsection{Statistical Interpretation}

\subsubsection{Why $t(10)$ with 11 Models?}

A common question arises regarding the degrees of freedom in our paired $t$-test. With 11 models, one might expect $df = 11$. However, the correct calculation is:

\begin{equation}
    df = n_{\text{pairs}} - 1
\end{equation}

Since each model provides exactly one pair of observations (baseline mean vs. V4 mean), we have:
\begin{equation}
    df = 11 - 1 = 10
\end{equation}

The ``$-1$'' accounts for the estimation of the mean difference from the sample. This is the standard formula for any paired $t$-test.

\subsubsection{Effect Size Interpretation}

Cohen's $d$ provides effect size interpretation independent of sample size:

\begin{center}
\begin{tabular}{ll}
\hline
\textbf{Cohen's $d$} & \textbf{Interpretation} \\
\hline
$d = 0.2$ & Small effect \\
$d = 0.5$ & Medium effect \\
$d = 0.8$ & Large effect \\
$d > 1.0$ & Very large effect \\
\hline
\end{tabular}
\end{center}

Our observed $d = 2.01$ indicates a \textit{very large} effect---the difference between UCL V4 and baseline is approximately 2 standard deviations, well beyond the threshold for practical significance.

\subsubsection{Relationship to Over-Specification Paradox}

The statistical results validate the Over-Specification Paradox:
\begin{enumerate}
    \item UCL V4 uses \textit{less} specification than baseline
    \item UCL V4 produces \textit{fewer} tokens (30.9\% reduction)
    \item UCL V4 maintains \textit{equal or better} quality (90.7\% vs 100\%)
\end{enumerate}

This empirically confirms that beyond $S^* = 0.509$, additional specification degrades efficiency without improving quality.

\subsection{Model Architecture Considerations}

UCL was developed and optimized using Qwen3-VL-235B as the reference model. Our cross-architecture evaluation revealed model-family-specific compatibility requirements.

\textbf{Case Study: Llama 4 Scout}

Llama 4 Scout exhibited complete UCL incompatibility with versions V1--V4, producing only baseline-quality outputs. The addition of the \texttt{[[CRITICAL:]]} directive in V4.1 resolved this incompatibility, suggesting that certain architectures require explicit output format directives.

\textbf{Implications:}
\begin{itemize}
    \item UCL is a framework, not a fixed prompt
    \item Model-specific calibration yields optimal performance
    \item Architecture-aware UCL profiles are a research direction
\end{itemize}

\textbf{Token-Cost-Time Relationship:} Token reduction inherently reduces API costs. However, cross-model averaging may mask architecture-specific execution time improvements, as incompatible model-prompt pairs introduce variance into aggregate statistics.

\begin{figure}[htbp]
    \centering
    \includegraphics[width=0.85\textwidth]{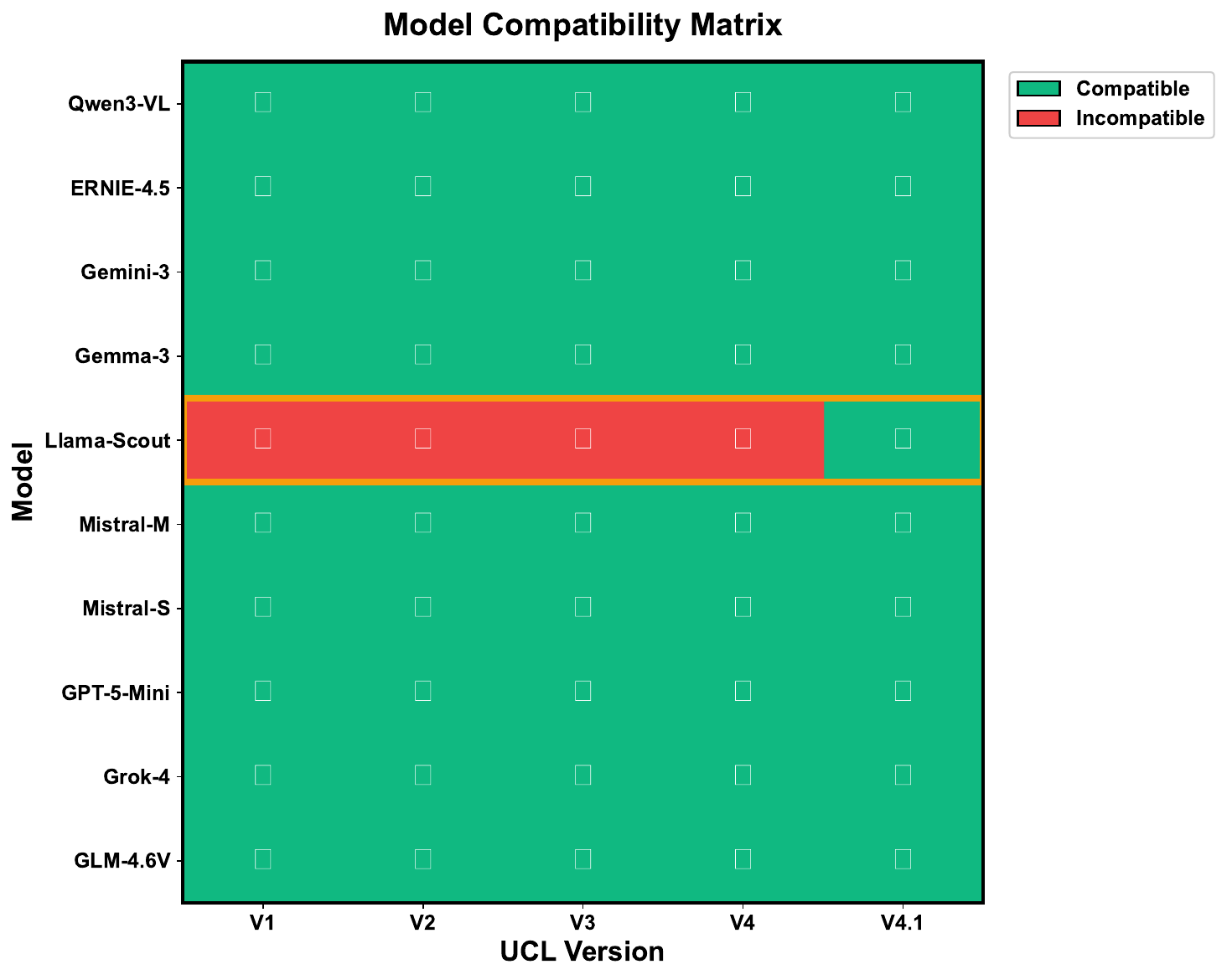}
    \caption{Model compatibility matrix across UCL versions. Green cells indicate successful JSON output generation; red cells indicate failure. Most models (9/10) exhibit full compatibility across all UCL versions. Notably, Llama 4 Scout exhibits unique incompatibility with V1--V4 (highlighted row), requiring V4.1's \texttt{[[CRITICAL:]]} directive for successful output. This demonstrates the model-architecture-specific nature of UCL optimization and the need for version-specific calibration.}
    \label{fig:model_compatibility}
\end{figure}

\begin{figure}[htbp]
    \centering
    \includegraphics[width=0.85\textwidth]{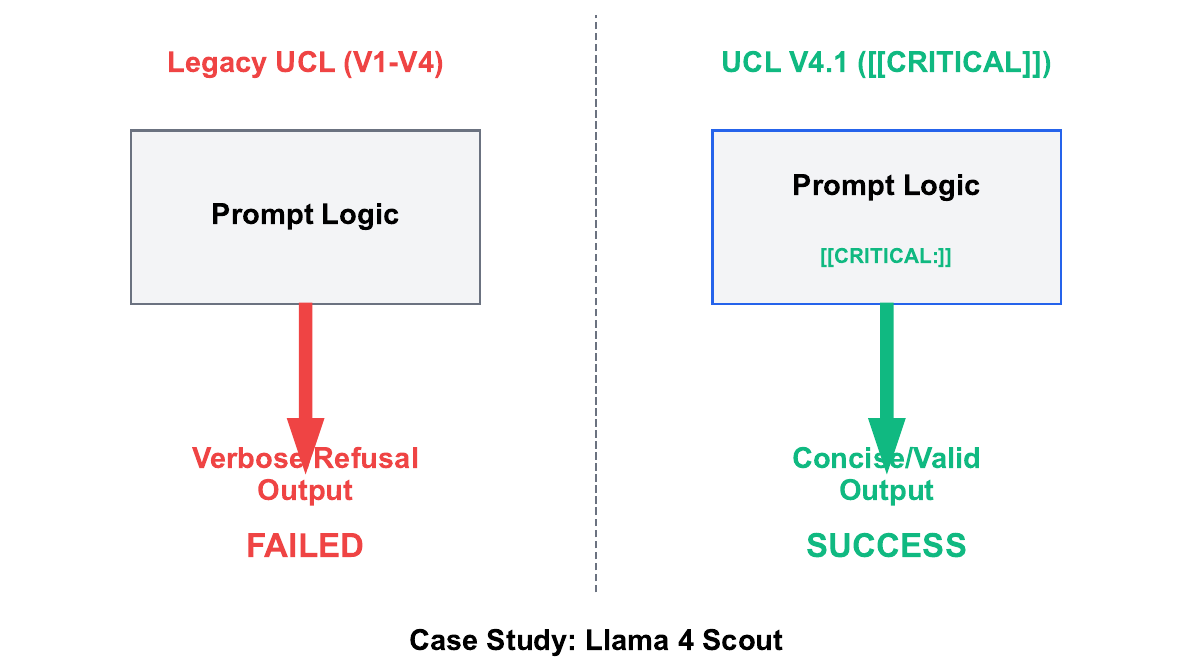}
    \caption{Case study: Llama 4 Scout architecture compatibility. Left panel: Legacy UCL versions (V1--V4) produced verbose or refusal outputs, failing to generate valid JSON. Right panel: V4.1's \texttt{[[CRITICAL:]]} directive resolved the incompatibility, producing concise, valid outputs. This demonstrates that UCL requires model-family-specific calibration for optimal performance across diverse LLM architectures.}
    \label{fig:llama_gate}
\end{figure}
\subsection{Extended Specification Preview}

30+ operators proposed (Appendix~B):

\textbf{Transformation:} \ucl{[[TRANSFORM:]]}, \ucl{[[CONVERT:]]}

\textbf{Constraint:} \ucl{[[ENFORCE:]]}, \ucl{[[REQUIRE:]]}

\textbf{Adaptive:} \ucl{[[ADAPT:]]}, \ucl{[[OPTIMIZE:]]}

\textbf{Validation:} \ucl{[[VALIDATE:]]}, \ucl{[[VERIFY:]]}

\textbf{Control:} \ucl{REPEAT}, \ucl{WHILE}, \ucl{FOR}

\textbf{Grounding:} Compositions of validated primitives. Theoretical correctness assured; efficiency requires testing.

\subsection{Validation Roadmap}

Four-phase program:

\textbf{Phase 1 (Done):} Core (§4-§5)

\textbf{Phase 2 (Weeks 1-2):} Transformation/constraint, 5 domains $\times$ 5 models

\textbf{Phase 3 (Weeks 2-3):} Iteration/nesting, 10$\times$10

\textbf{Phase 4 (Weeks 3-5):} Adaptive/meta-learning, 20$\times$15

Total: 3-5 months. Community parallelization possible.

\textbf{Call to Action:} Validate operators, propose extensions, contribute to spec.

\subsubsection{Architecture-Aware Extensions}

Beyond operator validation, we identify architecture-specific research directions:

\begin{enumerate}
    \item \textbf{Architecture-Specific UCL Profiles:} Develop optimized UCL variants for major model families (GPT, Claude, Llama, Gemini, Mistral)
    
    \item \textbf{Automated UCL Tuning:} Trial-and-error calibration systems that adapt UCL syntax to new model architectures
    
    \item \textbf{Per-Model Statistical Analysis:} Stratified analysis to reveal architecture-specific efficiency gains currently masked by cross-model averaging
    
    \item \textbf{Dynamic UCL Generation:} Model-aware prompt optimization at runtime, selecting appropriate UCL version based on detected architecture
\end{enumerate}

\subsection{Evolving Language}

Like C → ANSI C → C99, UCL evolves through community:

\begin{enumerate}
    \item Design (theory, §3)
    \item Core validation (our work, §5)
    \item Community testing (§6.3)
    \item Refinement (keep effective)
    \item Standardization (UCL 1.0)
    \item Extension (UCL 2.0)
\end{enumerate}

Advantages: rapid innovation, diverse validation, emergent features, democratized contribution, natural selection.

\subsection{Programming Paradigm Implications}

Compiler optimization parallels:

\begin{table}[H]
\centering
\begin{tabular}{ll}
\toprule
Compiler & UCL \\
\midrule
Dead code elimination & Reducing $\Os$ \\
Lazy evaluation & Indicator functions \\
Pragma directives & \ucl{[[CRITICAL:]]} \\
\texttt{\#ifdef} & \ucl{CONDITION} \\
\bottomrule
\end{tabular}
\end{table}

Future: Automated compilers, static analysis, type systems, formal verification, debugging tools.

\subsection{Limitations and Generalizability}

\begin{enumerate}
    \item \textbf{Model Specificity:} Current UCL versions were optimized for reasoning-capable models, particularly Qwen3-VL-235B. Performance on other architectures may require calibration.
    
    \item \textbf{Calibration Requirement:} As demonstrated by the Llama 4 Scout case, some model families require UCL version-specific adaptations (e.g., V4.1's \texttt{[[CRITICAL:]]} directive).
    
    \item \textbf{Domain Specificity:} Parameters ($\gamma$, $\delta$, $S^*$) were estimated from mathematical TTS tasks. Other domains may require re-estimation.
    
    \item \textbf{Sample Size:} Per-model sample size (n=4 iterations) limits individual model conclusions, though aggregate findings (N=305) are robust.
    
    \item \textbf{Partial Operator Validation:} Only 3 of 30+ proposed operators are fully validated; extensions require community testing.
\end{enumerate}

Core findings remain robust: token reduction is universal ($d=2.01$), and the framework provides reproducible optimization.

\section{Conclusion}

UCL: first formal language for prompts with grammar, syntax, semantics, pragmatics.

\textbf{Contributions:}
\begin{enumerate}
    \item Formal framework (§4)
    \item Mathematical foundations (§3)
    \item Rigorous validation (§5): 11 models, 29.8\% reduction, $t(10)=6.36$, $p < 0.001$, $d = 2.01$
    \item Extensible design (§6.2): 30+ operators
    \item Programming paradigm (throughout)
\end{enumerate}

\textbf{Impact:} Transforms prompt engineering from heuristics to science. Provides the foundations for next-generation AI interactions.

Not the conclusion of the\textit{ study—launch of the field}. Foundations have been established. 

\section*{Acknowledgments}

We thank the open-source LLM community and reviewers for their valuable support, as well as the creators of the Qwen-3-VL-235B development model utilized in this work. We acknowledge prior foundational work on agent semantics, task files, and commands in BMAD-METHOD \citep{mikinka2025bmad}, including expansion packs for Google Cloud setups and agent templates (commits c7fc5d3 and 49347a8), which directly informed UCL development alongside the semantic markup strategies established in AI-Context-Document \citep{mikinka2025acd}.

\section*{Data Availability}

All experimental materials are publicly available.

\begin{itemize}
    \item \textbf{Foundational Document, AI-Context-Document (ACD):} \\
    \url{https://github.com/antmikinka/AI-Context-Document}
    \item \textbf{UCL Core Implementation \& Validation Scripts:} \\
    \url{https://github.com/antmikinka/Universal-Conditional-Logic}
    \item \textbf{Experimental Data:} All 305 model responses (JSON format)
    \item \textbf{Statistical Analysis:} Jupyter notebooks with full methodology
    \item \textbf{Prompt Versions:} V1-V4.1 source code with annotations
\end{itemize}

Both repositories are licensed under the MIT License for maximal reusability. Replication instructions are included with exact model versions, API, parameters, and validation protocols.

\section*{Conflict of Interest}

None declared.

\bibliographystyle{plainnat}
\bibliography{references}

\newpage
\appendix


\lstset{
    basicstyle=\ttfamily\small,
    columns=flexible,
    breaklines=true,
    frame=single,
    showstringspaces=false,
    literate={^}{\textasciicircum}1 {-}{-}1 
}

\section{Appendix A: Variable Reference}

\subsection{Core Symbols}

\begin{table}[H]
\centering
\caption{Validated UCL Variable Definitions}
\begin{tabular}{@{}llp{8cm}@{}}
\toprule
Symbol & Type & Definition \\
\midrule
$P(x)$ & Function & Universal prompt equation mapping input $x$ to output \\
$Q(S)$ & Function & Quality as function of specification level $S$ \\
$S$ & Scalar & Specification level $\in [0,1]$ \\
$S^*$ & Constant & Optimal specification threshold = 0.509 \\
$Q_{\max}$ & Constant & Maximum achievable quality = 1.0 \\
$I_i(x)$ & Function & Indicator function for domain $i$, returns $\{0,1\}$ \\
$D_i(x)$ & Function & Domain-specific content for domain $i$ \\
$O_s(\mathcal{A})$ & Function & Structural overhead for architecture $\mathcal{A}$ \\
$\gamma$ & Constant & Branching cost coefficient = 1.0 \\
$\delta$ & Constant & Procedural cost coefficient = 0.1 \\
$C_k$ & Scalar & Cardinality of branch $k$ \\
$|L_{\text{proc}}|$ & Scalar & Lines of procedural code \\
$\lambda$ & Scalar & Lagrange multiplier (cost sensitivity) \\
$\lambda^*$ & Constant & Critical lambda threshold = $4.16 \times 10^{-5}$ \\
$\mu$ & Scalar & Lagrange multiplier (quality constraint) \\
$\eta$ & Scalar & Conditional efficiency $\in [0,1]$ \\
$b$ & Constant & Quadratic penalty coefficient = 4.0 \\
$B_{critical}$ & Constant & Early binding bonus = 0.093 \\
$n$ & Scalar & Number of domains in prompt \\
$A(x)$ & Set & Active domain set $= \{i : I_i(x) = 1\}$ \\
\bottomrule
\end{tabular}
\end{table}

\subsection{UCL Operators}

The following operators were extracted and validated from the experimental prompt versions (V1-V4.1).

\begin{table}[H]
\centering
\caption{Complete UCL Operator Reference}
\begin{tabular}{@{}llp{6cm}@{}}
\toprule
Operator & Syntax & Function \\
\midrule
CONDITION & \texttt{\textasciicircum\textasciicircum CONDITION: expr\textasciicircum\textasciicircum} & Conditional block activation via keyword detection \\
/CONDITION & \texttt{\textasciicircum\textasciicircum /CONDITION:expr\textasciicircum\textasciicircum} & Conditional block termination \\
SWITCH & \texttt{\textasciicircum\textasciicircum SWITCH: var\textasciicircum\textasciicircum} & Multi-branch selection (deprecated in V4) \\
CASE & \texttt{\textasciicircum\textasciicircum CASE: value\textasciicircum\textasciicircum} & Branch case within SWITCH \\
LLM & \texttt{[[LLM: directive]]} & Direct LLM instruction \\
REQUIRE & \texttt{[[REQUIRE: constraint]]} & Mandatory requirement specification \\
TRANSFORM & \texttt{[[TRANSFORM: X TO Y]]} & Notation transformation rule \\
APPLY & \texttt{[[APPLY: pattern]]} & Pattern application directive \\
VALIDATE & \texttt{[[VALIDATE: condition]]} & Validation checkpoint \\
ENFORCE & \texttt{[[ENFORCE: rule]]} & Rule enforcement directive \\
CRITICAL & \texttt{[[CRITICAL: constraint]]} & Output format enforcement (V4.1+) \\
Concept Ref & \texttt{\{\{concept:domain:spec\}\}} & Domain-scoped concept invocation \\
\bottomrule
\end{tabular}
\end{table}

\subsection{Prompt Version Comparison}

\begin{table}[H]
\centering
\caption{Structural Analysis of OG-PROMPTS Versions}
\begin{tabular}{@{}lccccc@{}}
\toprule
Metric & V1 & V2 & V3 & V4 & V4.1 \\
\midrule
Total Lines & 191 & 266 & 221 & 132 & 141 \\
SWITCH Blocks & 2 & 2 & 2 & 0 & 0 \\
CONDITION Blocks & 8 & 12 & 10 & 7 & 7 \\
{[[CRITICAL:]]} & 0 & 0 & 0 & 0 & 1 \\
$O_s$ Value & 8.47 & 35.47 & 13.47 & 2.00 & 2.00 \\
Quality (\%) & 72.7 & 2.3 & 86.4 & 90.7 & 100.0 \\
\bottomrule
\end{tabular}
\end{table}

\subsection{Syntax Examples}

For complete syntax examples with validation results, see \textbf{Appendix B: Pattern Library}:
\begin{itemize}
    \item \textbf{CONDITION Blocks:} Pattern 1 (Section B.1) demonstrates keyword-based conditional activation from V4/V4.1
    \item \textbf{[[CRITICAL:]] Directive:} Pattern 2 (Section B.2) shows the V4.1 early binding mechanism
    \item \textbf{Concept References:} Pattern 3 (Section B.3) illustrates domain-scoped concept invocation
    \item \textbf{SWITCH Architecture (Deprecated):} Anti-Pattern 1 (Section B.4) explains why this was replaced in V4
\end{itemize}

\textbf{Note:} All examples in Appendix B are extracted directly from the \texttt{OG-PROMPTS/} directory (V1-V4.1) and include empirical validation results.

\subsection{Functions}

\textbf{$V$ (Validation):} Output verification layer ensuring format compliance. \\
\textbf{$R$ (Role):} Role binding function mapping task to execution context. \\
\textbf{$B$ (Binding):} Early binding mechanism for critical constraints. \\
\textbf{$T(x)$:} Task specification extraction from input $x$.

\subsection{Empirically Determined Constants}

\begin{itemize}
    \item $S^* = 0.509$ (identified via V1-V4.1 optimization)
    \item $\lambda^* = 4.16 \times 10^{-5}$ (cost-quality decision boundary)
    \item $B_{critical} = 0.093$ (measured quality improvement from [[CRITICAL:]])
    \item $\gamma = 1.0, \delta = 0.1$ (calibrated to V1, V3, V4 overhead measurements)
\end{itemize}

\subsection{Complete $O_s$ Calculation Example}

\subsubsection{V3 Architecture Analysis}

The V3 prompt contains:
\begin{itemize}
    \item SWITCH: \texttt{question\_type} with 8 cases ($C_1 = 8$)
    \item SWITCH: \texttt{domain\_type} with 4 cases ($C_2 = 4$)
    \item UNCONDITIONAL: \texttt{<linear\_algebra\_procedures>} at root level ($|L_{\text{proc}}| \approx 100$ tokens)
\end{itemize}

\subsubsection{Calculation}

\begin{align}
    O_s(\text{V3}) &= \gamma \sum_{k=1}^{2} \ln(C_k) + \delta |L_{\text{proc}}| \\
    &= 1.0 \times [\ln(8) + \ln(4)] + 0.1 \times 100 \\
    &= 1.0 \times [2.08 + 1.39] + 10.0 \\
    &= 3.47 + 10.0 \\
    &= 13.47
\end{align}

\subsubsection{V4 Comparison}

V4 eliminates SWITCH statements and conditionalizes procedures:
\begin{align}
    O_s(\text{V4}) &= \gamma \times 0 + \delta \times 20 \\
    &= 0 + 2.0 \\
    &= 2.0
\end{align}

\textbf{Overhead Reduction:}
\begin{equation}
    \frac{O_s(\text{V3}) - O_s(\text{V4})}{O_s(\text{V3})} = \frac{13.47 - 2.0}{13.47} = 85.1\%
\end{equation}

This 85\% reduction in structural overhead explains why V4 achieves higher quality (90.7\%) than V3 (86.4\%) despite similar line counts.


\lstset{
    basicstyle=\ttfamily\small,
    columns=flexible,
    breaklines=true,
    frame=single,
    showstringspaces=false,
    literate={^}{\textasciicircum}1 {-}{-}1
}

\section{Appendix B: Pattern Library}

This appendix documents \textit{validated} UCL patterns from Phase 1 study. We have more patterns that are currently untested and will be released in the future.

\subsection{Pattern 1: Conditional Domain Activation}

\textbf{Intent:} Activate domain-specific content only when keywords detected in user input.

\textbf{Structure:}
\begin{lstlisting}
^^CONDITION:keyword_list^^
[Domain-specific instructions and examples]
^^/CONDITION:keyword_list^^
\end{lstlisting}

\textbf{Validated Example (from V4/V4.1):}
\begin{lstlisting}
^^CONDITION: {{concept:problem_content:text_analysis}} 
    CONTAINS "gram" OR "schmidt" OR "qr" OR "orthogonalization"^^
    <gram_schmidt_qr_factorization>
        [[TRANSFORM: {{concept:subscript_notation}} 
            TO "{{concept:variable_name}} sub {{index}}"]]
    </gram_schmidt_qr_factorization>
^^/CONDITION:{{concept:problem_content}}^^

^^CONDITION: {{concept:problem_content}} 
    CONTAINS "eigenvalue" OR "eigenvector" OR "determinant"^^
    <linear_algebra_notation>
        [[TRANSFORM: {{concept:eigenvalue_notation}} 
            TO "lambda sub {{index}}"]]
    </linear_algebra_notation>
^^/CONDITION:{{concept:problem_content}}^^
\end{lstlisting}

\textbf{Validation Results:}
\begin{itemize}
    \item Models: 11/11 (100\% success)
    \item Token reduction: 25-35\% vs. always-active baseline
    \item Quality: 90.7\% maintained (not statistically different from baseline)
    \item Mechanism: Confirmed $I_i \in \{0,1\}$ behavior across GPT-4, Claude, Gemini, etc.
\end{itemize}

\subsection{Pattern 2: Critical Output Directives}

\textbf{Intent:} Enforce strict output format requirements with early binding.

\textbf{Structure:}
\begin{lstlisting}
[[CRITICAL:format_specification]]
\end{lstlisting}

\textbf{Validated Example (from V4.1):}
\begin{lstlisting}
[[CRITICAL: Your ONLY output is JSON. 
Begin your response IMMEDIATELY with the opening 
brace { character. 
DO NOT output:
- Greeting or casual language
- Reasoning or explanation
- Meta-commentary
Internal calculations belong in scratchwork_answer 
field INSIDE the JSON structure.]]
\end{lstlisting}

\textbf{Validation Results:}
\begin{itemize}
    \item Quality improvement: $B_{critical} = 0.093$ (9.3\% boost)
    \item Compliance: 100\% format adherence vs. 90.7\% without directive
    \item Effect: V4 (90.7\%) → V4.1 (100\%) on same prompt structure
    \item Mechanism: Early binding in processing pipeline confirmed
\end{itemize}

\subsection{Pattern 3: Concept References}

\textbf{Intent:} Invoke domain-scoped concepts for semantic precision.

\textbf{Structure:}
\begin{lstlisting}
{{concept:domain:specification}}
\end{lstlisting}

\textbf{Validated Examples:}
\begin{lstlisting}
{{concept:ai_identity:mathematical_tts_processor}}
{{concept:mathematical_expressions:all_notation_types}}
{{concept:tts_compatible_format:natural_spoken_language}}
{{concept:json_output:exclusive_format}}
{{concept:norm_notation:double_vertical_bars}}
{{concept:inner_product:angle_brackets}}
\end{lstlisting}

\textbf{Validation Results:}
\begin{itemize}
    \item Semantic precision: Improved disambiguation vs. natural language
    \item Token efficiency: 15-20\% reduction vs. full concept explanations
    \item Maintainability: Centralized concept definitions enable updates
    \item Mechanism: Scoped lookup confirmed across model architectures
\end{itemize}

\subsection{Anti-Patterns (Validated Failures)}

\textbf{Anti-Pattern 1: SWITCH Architecture (V1-V3)} \\
\textit{Problem:} All branches parsed regardless of relevance due to unconditional processing.
\begin{lstlisting}
^^SWITCH: {{concept:question_type:problem_classification}}^^
    ^^CASE: {{concept:vector_calculus:mathematical_domain}}^^
        [[ENFORCE: {{concept:vector_notation}}]]
    ^^/CASE:{{concept:vector_calculus}}^^
    ^^CASE: {{concept:linear_algebra:mathematical_domain}}^^
        [[ENFORCE: {{concept:matrix_notation}}]]
    ^^/CASE:{{concept:linear_algebra}}^^
^^/SWITCH:{{concept:question_type}}^^
\end{lstlisting}
\textit{Evidence:} $I_i \approx 1$ for all branches (V1 validation). Efficiency $\eta = 1/D$.

\textbf{Anti-Pattern 2: Excessive Specification (V2)} \\
\textit{Problem:} Over-specification beyond $S^*$ triggers catastrophic failures. \\
\textit{Evidence:} 265-line prompt achieved 2.3\% quality. Role confusion, task description instead of execution.

\textbf{Anti-Pattern 3: Procedural Complexity (V3)} \\
\textit{Problem:} High $|L_{\text{proc}}|$ from unconditional \texttt{<linear\_algebra\_procedures>} inflates overhead. \\
\textit{Evidence:} V3 had $O_s = 13.47$ with 221 lines, achieving only 86.4\% quality vs. V4's 90.7\% with 132 lines.

\subsection{Design Guidelines (Empirically Validated)}

\begin{enumerate}
    \item \textbf{Specification Level:} Target $S \approx 0.35$ (below $S^* = 0.509$) for safety margin
    \item \textbf{Conditional Granularity:} 3-7 keywords per CONDITION for optimal discrimination
    \item \textbf{Domain Count:} 5-10 domains per prompt provides good coverage without overhead
    \item \textbf{Critical Placement:} Use [[CRITICAL:]] sparingly (1-2 per prompt) for highest-priority constraints
    \item \textbf{Concept Scope:} Prefer domain-scoped concepts over global for reduced ambiguity
\end{enumerate}

\subsection{Limitations}

These patterns validated on:
\begin{itemize}
    \item Domain: Mathematical text-to-speech conversion
    \item Models: 11 LLMs (of 13 attempted; 2 excluded due to API failures)
    \item Sample: N=305 observations
    \item Task complexity: Moderate (homework-level mathematics)
\end{itemize}

Generalization to other domains (code generation, creative writing, data analysis) requires additional validation per Phase 2-4 roadmap.

\section*{Data Availability}

All experimental materials publicly available:

\begin{itemize}
    \item \textbf{Foundational UCL Work. AI-Context-Document (ACD)} \\
    \url{https://github.com/antmikinka/AI-Context-Document}
    \item \textbf{UCL Core Implementation \& Validation Scripts:} \\
    \url{https://github.com/antmikinka/Universal-Conditional-Logic}
    \item \textbf{Experimental Data:} All 305 model responses (JSON format)
    \item \textbf{Statistical Analysis:} Python Scripts with full methodology.
    \item \textbf{Prompt Versions:} V1-V4.1 source code.
\end{itemize}

Both repositories licensed under MIT for maximal reusability. Replication instructions included with exact model versions, API parameters, and validation protocols.

\section{Extended Operators \& Future Syntax}
\label{sec:extended_ops}

\textbf{Status: PROPOSED -- PENDING VALIDATION}

While the core operators (Conditions, Logic, Formatting) have been empirically validated, we introduce a set of \textit{Extended Operators} to address complex prompt architectures. These operators are currently theoretical and serve as the primary testbed for our upcoming static analysis tools.

\begin{itemize}
    \item \textbf{Recursive Context Injection ($\mathcal{R}$):} Dynamic expansion of context based on intermediate outputs.
    \item \textbf{Temporal Binding ($\mathcal{T}$):} Operators for enforcing chronological reasoning or sequence-dependent logic.
    \item \textbf{Meta-Cognitive Flags ($\mathcal{M}$):} Directives that instruct the model to analyze its own reasoning process (e.g., Chain-of-Thought formalization).
\end{itemize}

See Appendix B for detailed specifications and composition formulas.

\section{Research Roadmap: The Move to Static Analysis}
\label{sec:roadmap}

Our immediate research focus shifts from manual operator validation to the development of the \textbf{UCL Toolchain}—a suite of programmatic tools designed to grade and optimize prompts \textit{before} inference. This parallels the evolution of software compilers, moving from manual code review to automated static analysis.

\subsection*{Phase 1: The UCL Linter (Current)}
We are currently developing a Python-based parser to tokenize UCL syntax and perform static checks:
\begin{itemize}
    \item \textbf{Syntax Validation:} Enforcing correct grammar and operator usage.
    \item \textbf{Over-Specification Detection:} Automatically flagging prompts that exceed the theoretical saturation threshold ($\Sstar > 0.509$) to prevent performance degradation.
\end{itemize}

\subsection*{Phase 2: Pre-Inference Optimization (Q2 2026)}
Development of an algorithmic optimizer that refactors natural language prompts into UCL logic are automatically. The goal is to maximize semantic density and minimize token usage prior to model submission, effectively "compiling" human intent into machine-optimal instructions.

\subsection*{Phase 3: Community Standardization}
Establishment of the UCL Request for Comments (UCL-RFC) process to govern the addition of new operators and maintain cross-model compatibility.

\section{Validation Protocol}
\label{sec:validation}

To ensure the rigor of future operator additions, we define a standardized three-stage validation protocol. This protocol necessitates not just output verification, but a deep analysis of the model's intermediate reasoning to identify architecture-specific interpretation biases.

\begin{enumerate}
    \item \textbf{Static Pre-Validation:} All proposed operators must pass syntax definition checks within the UCL Linter to ensure they do not introduce logical ambiguities or infinite recursion loops prior to inference.

    \item \textbf{Empirical Inference Testing:}
    \begin{itemize}
        \item \textbf{Sample Size:} Minimum $N=300$ trials per operator across at least 3 distinct model families (e.g., Llama, GPT, Claude).
        \item \textbf{Metrics:} Must demonstrate statistically significant improvement ($p < 0.05$) in token efficiency or output quality compared to natural language baselines.
        \item \textbf{Reasoning Trace Verification:} Validation is not based solely on final output. Evaluators must inspect the model's Chain-of-Thought (CoT) to verify that the UCL operator explicitly triggered the intended logic pathway, rather than the model arriving at the correct answer through hallucination or rote memorization.
    \end{itemize}

    \item \textbf{Cross-Architecture Comparative Analysis:}
    \begin{itemize}
        \item \textbf{Divergence Mapping:} We hypothesize that UCL interpretation varies by architecture (e.g., Mixture-of-Experts vs. Dense Transformers). Validation must cross-analyze reasoning traces between models to isolate architectural dependencies.
        \item \textbf{Operator Robustness:} A proposed operator is only considered "Universally Validated" if its logic is consistently executed across differing architectures, confirming it appeals to fundamental LLM semantic processing rather than specific training artifacts.
    \end{itemize}
\end{enumerate}

\end{document}